\title{Exact Bayesian inference for off-line change-point detection in tree-structured graphical models
}
\author{L. Schwaller \and S. Robin
}
\institute{
L. Schwaller \and S. Robin
\at UMR MIA-Paris, AgroParisTech, INRA,\\ Universit\'e Paris-Saclay, 75005, Paris, France\\
\email {loic.schwaller@agroparistech.fr}
}
\date{}
\newcommand{\blue}[1]{{\leavevmode\color{black}{#1}}}
\newcommand{\LS}[2]{\textcolor{gray}{#1}\textcolor{black}{#2}}
\def\T{{\mathcal T}}
\def\M{{\mathcal M}}
\def\TT{{\mathbf T}}
\def\p{{\mathbf p}}
\def\E{{\mathcal E}}
\def\XX{{\mathcal X}}
\DeclareMathOperator*{\argmax}{arg\,max}
\newcommand{\defeq}{\mathrel{\vcenter{\baselineskip0.5ex \lineskiplimit0pt
                     \hbox{\scriptsize.}\hbox{\scriptsize.}}}%
                     =}
\begin{document}

\maketitle

\begin{abstract}
We consider the problem of change-point detection in multivariate time-series. The multivariate distribution of the observations is supposed to follow a graphical model, whose graph and parameters are affected by abrupt changes throughout time. We demonstrate that it is possible to perform exact Bayesian inference whenever one considers a simple class of undirected graphs called spanning trees as possible structures. We are then able to integrate on the graph and segmentation spaces at the same time by combining classical dynamic programming with algebraic results pertaining to spanning trees. In particular, we show that quantities such as posterior distributions for change-points or posterior edge probabilities over time can efficiently be obtained. We illustrate our results on both synthetic and experimental data arising from biology and neuroscience. 
\end{abstract}

\keywords{change-point detection, exact Bayesian inference, graphical model, multivariate time-series, spanning tree}

\section{Introduction}

We are interested in time-series data where several variables are observed throughout time. An assumption often made in multivariate settings is that there exists an underlying network describing the dependences between the different variables. When modelling time-series data, one is faced with a choice: shall this network be considered stationary or not? \blue{Taking the example of genomic data, it might for instance be unrealistic to consider that the network describing how a pool of genes regulate each other remains identical throughout time. This network might slowly evolve, or undergo abrupt changes leading to the initialisation of new morphological development stages in the organism of interest. Here, we focus our interest on the second scenario.}


\blue{The inference of the dependence structure ruling a multivariate time-series was first performed under the assumption that this structure was stationary (\textit{e.g.} \citep{Friedman1998,Murphy99}).} Non-stationarity has then been addressed in a variety of ways. Classical Dynamic Bayesian Networks (DBNs) can for instance be adapted to allow the directed graph (or Bayesian Network) describing the interactions between two consecutive time-points to change, leading to so-called switching DBNs \citep{Robinson2010,Lebre2010,Grzegorczyk2011}. Some models alternatively suppose that the heterogeneity is the result of parameters changing smoothly with time \citep{Zhou2010,Kolar2010}. This is especially appropriate for Gaussian graphical models where the graph structure can directly be read in the non-zero terms of the precision (or inverse-covariance) matrix, therefore enabling smooth transitions within the otherwise discrete space of graphs. Hidden Markov Models (HMM) have also been used to account for heterogeneity in multivariate time-series \citep{Fox2009,Barber2010}. In the aforementioned models, the inference can rarely be performed exactly, and often relies on sampling techniques such as Markov Chain Monte Carlo (MCMC).

\begin{figure*}[b]
\centering
\includegraphics[width=0.8\textwidth]{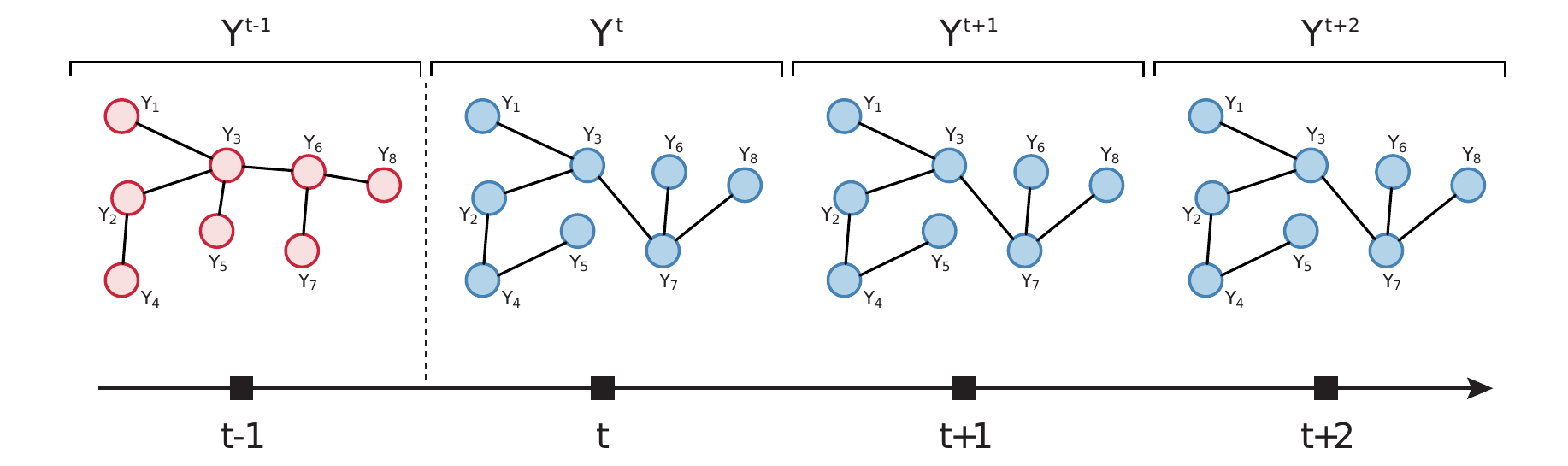}
\caption{Illustration of the change-point detection problem in the tree structure of a graphical model.}
\label{fig:illustration_chgtpt}
\end{figure*}

The model that we consider here belongs to the class of product partition models (PPM) \citep{Barry1992}. We assume that the observed data $\{y^{t}\}_{t=1,...,N}$ are a realisation of a process $\{Y^{t}\}_{t=1,...,N}$  where, for $1 \leqslant t \leqslant T$, $Y^{t}$ is a random vector with dimension $p \geqslant 2$. If $m$ is a segmentation of $\{1, \dots, T\}$ with change-points $1 = \tau_0 < \tau_1 < \dots < \tau_{K-1} < \tau_K = N$, the model has the general form
\begin{align*}
Y_t &\sim \pi(G_r,\theta_r), & \textrm{if}~ t\in r ~ \textrm{and} ~ r = \llbracket\tau_i;\tau_{i+1}\llbracket,
\end{align*} 
where $G_r$ and $\theta_r$ respectively stand for the graph describing the dependence structure and the distribution parameters on segment $r$. The parameters $(G_r,\theta_r)$  are assumed to be independent between segments. \blue{ This model is illustrated in Figure \ref{fig:illustration_chgtpt}.}

We are interested in retrieving all change-points at the same time, therefore performing off-line detection. It has been shown that both off-line \citep{Fearnhead2006,Rigaill2012} and on-line detection \citep{Fearnhead2007,Caron2012} of change-points can be performed exactly and efficiently in this model thanks to dynamic programming. \cite{Xuan2007} explicitly consider this framework in a multivariate Gaussian setting. They estimate a set of possible structures for their model by performing regularized estimation of the precision matrix on arbitrary overlapping time segments. This graph family is then taken as a starting point in an iterative procedure where the segmentation and the graph family are sequentially updated to get the best segmentation and graph series.

\paragraph{Our contribution}From a Bayesian point of view, the problem at hand raises an interesting and quite typical problem as both continuous and discrete parameter are involved in the model. Indeed, the location and scale parameters or, more specifically, the means and (conditional) covariances associated with each segments are continuous but the location of the change-points and the structure of the graphical model within each segments are not. Denoting $\theta$ the set of continuous parameters, $Q$ the set of discrete parameters and $y$ the observed data, Bayesian inference will typically rely on integrals such as the marginal likelihood of the data, that is
$$
p(y) = \sum_{Q \in \mathcal{Q}} p(Q) \int_{\theta \in \Theta} p(y|\theta, Q) p(\theta|Q) d\theta. 
$$
In many situations, the use of conjugate priors allows us to compute the integral with respect to $\theta$ in an exact manner. Still, the summation over all possible values for the discrete parameter $Q$ is often intractable due to combinatorial complexity. One aim of this article is to remind that the algebraic properties of the space $\mathcal{Q}$ can sometimes help to actually achieve this summation in an exact manner, so that a fully exact Bayesian inference can be carried out.\\

We show that exact and efficient Bayesian inference can be performed in a multivariate product partition model within the class of undirected graphs called spanning trees. These structures are connected graphs, with no cycles \blue{(see Figure \ref{fig:illustration_chgtpt} for examples)}. When $p$ nodes are considered, we are left with $p^{p-2}$ possible spanning trees, but exact inference remains tractable by using algebraic properties pertaining to this set of graphs. On each independent temporal segment, we place ourselves in the framework developed by \cite{Schwaller2015}, in which the likelihood of a segment $\llbracket s; t \rrbracket$, defined by
\begin{align*}
p(y^{\llbracket s; t \rrbracket}) \defeq \sum_{T\in \T} \int p(y^{\llbracket s; t \rrbracket} | \theta,T)p(\theta|T)d\theta,
\end{align*}
where $\T$ stands for the set of spanning trees, can be computed efficiently. We provide explicit and  exact formulas for quantities of interest such as the posterior distribution of change-points or posterior edge probabilities over time. We also provide a way to assess whether the status of an edge (or of the whole graph) remains identical throughout the time-series or not when the partition is given. 

\paragraph{Outline}
In Section \ref{sec:background}, we provide some background on graphical models and product partition models. In particular, we give a more detailed presentation of the results of \cite{Rigaill2012} on dynamic programming used for change-point detection problems. We also introduce tree-structured graphical models. The model and its properties are presented in Section \ref{sec:model}. Section \ref{sec:QoI} enumerates a list of quantities of interest that can be computed in this model, while Section \ref{sec:edge_status} deals with edge and graph status comparison, when the segmentation is known. Sections \ref{sec:simulation} and \ref{sec:appli} respectively present the simulation study and the applications to both  biological and neuroscience data.

\section{Background}
\label{sec:background}

In this section we introduce two models involving a discrete parameter, for which exact integration over this parameter is possible.

\subsection{Product Partition Models}
\label{subsec:ppm}

Let $Y=\{Y^{t}\}_{t=1,...,N}$ be an independent random process and let $y$ be a realisation of this process. For any time interval $r$, we let $Y^r \defeq \{Y^{t}\}_{t\in r}$ denote the observations for $t\in r$. PPMs as described in \citep{Barry1992} work under the assumption that the observations can be divided in independent adjacent segments. Thus, if $m$ is a partition of $\llbracket 1 ; N \rrbracket$, the likelihood of $y$ conditioned on $m$ can be written as 
\begin{align*}
p(y|m) & = \prod_{r\in m} p(y^r|r), \\
p(y^r|r) & = \int \left(\prod_{t \in r} p(y^t|\theta_r) \right)p(\theta_r)d\theta_r,
\end{align*}
where $\theta_r$ is a set of parameters giving the distribution of $Y^t$ for $t \in r$. For the sake of clarity, we let $p(y^r)$ denote $p(y^r|r)$ in the following.

For $K \geqslant 1$, we let $\mathcal{M}_K$ denote the set made of the partitions of $\llbracket 1 ; N \rrbracket$ into $K$ segments. \LS{}{The cardinality of this set is $\binom{N-1}{K-1}$.} More generally, we let $\mathcal{M}_K(\llbracket s;t \llbracket)$ denote the partitions of any interval $\llbracket s;t \llbracket$ into $K$ segments. In order to get the marginal likelihood of $y$ conditionally on $K$, one has to integrate out both $m$ and $\theta = \{\theta_r\}_{r\in m}$:
\begin{align*}
p(y|K) &= \sum_{m \in \mathcal{M}_K} p(m)\prod_{r\in m}p(y^r) \\
& = \sum_{m \in \mathcal{M}_K} p(m)\prod_{r\in m}\int \left(\prod_{t \in r} p(y^t|\theta_r) \right)p(\theta_r)d\theta_r.
\end{align*}
If the distribution of $m$, conditional on $K$, factorises over the segments \LS{}{with an expression of the form}
\begin{align}
p(m|K) = \frac{1}{ C_K(a)}\prod_{r\in m} a_r, \label{eq:prior_segmentation}
\end{align} 
where $a_r$ are non-negative weights assigned to all segments and $C_K(a) = \sum_{m\in \M_K}\prod_{r\in m} a_r$ is a normalising constant, these integrations can be performed separately. \cite{Rigaill2012} introduced a matrix containing the weighted likelihood of all possible segments, whose general term is given by
\begin{align}
A_{s,t} = \left\lbrace\begin{array}{ll}
a_{\llbracket s;t\llbracket}\cdot p(y^{\llbracket s;t\llbracket}) & \textrm{if}~1\leqslant s<t \leqslant N+1, \\ 
0 & \textrm{otherwise}.
\end{array} \right. \label{eq:A}
\end{align}
This matrix can be used in an algorithm designed according to a dynamic programming principle to perform the integration on $\mathcal{M}_K$ efficiently. 
\begin{proposition}[\cite{Rigaill2012}]
\label{prop:rigaill}
\begin{align*}
[A^K]_{s,t} = \sum_{m\in \M_K(\llbracket s;t \llbracket)}\prod_{r\in m} a_{r}\cdot p(y^{r})
\end{align*}
where $A^k$ denotes the $k$-th power of matrix $A$ and $\left[A^{k}\right]_{st}$ its general term. Moreover, 
\begin{align*}
\mathcal{A}_K \defeq \{[A^k]_{1,t},[A^k]_{t,n+1} \}_{1\leqslant k \leqslant K \atop 2\leqslant t \leqslant N}
\end{align*}
 can be computed in $O(KN^2)$ time.
\end{proposition}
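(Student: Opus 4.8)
The plan is to establish the two assertions separately: first the closed form for $[A^K]_{s,t}$ by induction on the number of segments $K$, then the complexity bound by exploiting the fact that only one row and one column of each power are actually needed.

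For the identity, I would induct on $K$. The base case $K=1$ is immediate: $\M_1(\llbracket s;t \llbracket)$ contains only the trivial partition $\{\llbracket s;t\llbracket\}$, so the right-hand side reduces to $a_{\llbracket s;t\llbracket}\,p(y^{\llbracket s;t\llbracket})$, which is exactly $A_{s,t}$ by definition \eqref{eq:A} (both sides vanishing when $s\geqslant t$). For the inductive step I would write $A^K = A^{K-1}A$ and expand the matrix product as $[A^K]_{s,t} = \sum_u [A^{K-1}]_{s,u}\,A_{u,t}$. The summation index $u$ plays the role of the starting point of the last segment: every partition $m\in\M_K(\llbracket s;t\llbracket)$ splits uniquely into a partition $m'\in\M_{K-1}(\llbracket s;u\llbracket)$ of its first $K-1$ segments together with the final segment $\llbracket u;t\llbracket$, and this correspondence is a bijection once $u$ ranges over admissible cut points (the nonzero terms automatically enforce $s<u<t$). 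Since the weights and segment likelihoods factorise over segments, $\prod_{r\in m} a_r\,p(y^r) = \big(\prod_{r\in m'} a_r\,p(y^r)\big)\cdot a_{\llbracket u;t\llbracket}\,p(y^{\llbracket u;t\llbracket})$, so substituting the inductive hypothesis for $[A^{K-1}]_{s,u}$ and using $A_{u,t} = a_{\llbracket u;t\llbracket}\,p(y^{\llbracket u;t\llbracket})$ collapses the double sum into the desired single sum over $\M_K(\llbracket s;t\llbracket)$.

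For the complexity claim, the key observation is that $\mathcal{A}_K$ contains only the entries $[A^k]_{1,t}$ and $[A^k]_{t,N+1}$, that is, the first row and last column of each power $A^k$, rather than the full matrices. I would therefore avoid ever forming a complete matrix product, which would cost $O(N^3)$ per power. Instead, the first row of $A^k$ is obtained from that of $A^{k-1}$ by a single row-vector$\,\times\,$matrix multiplication, $[A^k]_{1,\cdot} = [A^{k-1}]_{1,\cdot}\,A$, costing $O(N^2)$; symmetrically the last column satisfies $[A^k]_{\cdot,N+1} = A\,[A^{k-1}]_{\cdot,N+1}$, again $O(N^2)$. Iterating these two recursions for $k=1,\dots,K$ produces all of $\mathcal{A}_K$ in $O(KN^2)$ time. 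The strict upper-triangularity of $A$ could halve the constant but does not affect the order.

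The routine part is the bookkeeping; the step I expect to demand the most care is keeping the half-open interval conventions consistent in the inductive split, so that the gluing $\llbracket s;u\llbracket\cup\llbracket u;t\llbracket = \llbracket s;t\llbracket$ is exact and the bijection between $\M_K(\llbracket s;t\llbracket)$ and the pairs $(m',\llbracket u;t\llbracket)$ neither double-counts nor omits a partition. Once that correspondence is pinned down, both the identity and the complexity bound follow mechanically.
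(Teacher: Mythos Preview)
Your proposal is correct. The paper itself does not supply a proof of this proposition: it is quoted as a result of \cite{Rigaill2012} and used as a black box, so there is no argument in the paper to compare against. That said, your induction on $K$ via the factorisation $A^K = A^{K-1}A$ together with the bijection between $m\in\M_K(\llbracket s;t\llbracket)$ and pairs $(m',\llbracket u;t\llbracket)$ with $m'\in\M_{K-1}(\llbracket s;u\llbracket)$ is exactly the standard dynamic-programming argument underlying the cited result, and your complexity analysis---propagating only the first row and last column via vector--matrix products at $O(N^2)$ each, for $K$ powers---matches the $O(KN^2)$ claim. The only cosmetic point is that the half-open conventions you flag as delicate are indeed handled automatically by the strict upper-triangularity of $A$, so the nonzero pattern of the sum $\sum_u$ enforces $s<u<t$ without further bookkeeping.
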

In particular, $[A^K]_{1,n+1} = C_K(a)\cdot p(y|K)$. Several quantities of interest share the same form: from $\mathcal{A}_K$, \cite{Rigaill2012} also derived exact formulas for the posterior probability of a change-point to occur at time $t$ or for the posterior probability that a given segment $r$ belongs to $m$ (see Section \ref{subsec:changepoint}). Classical Bayesian selection criteria for $K$ are also given. One can notice that $C_K(a)$ can be recovered by applying Proposition \ref{prop:rigaill} not to matrix $A$ but to a matrix defined similarly from $a$. For the uniform distribution on $\M_K$, \textit{i.e.} $a_r \equiv 1$, we get $C_K(a) = {{N-1}\choose{K-1}}$.\\

\cite{Fearnhead2006} worked under a slightly different model where $m$ is not chosen conditionally on $K$ but is instead drawn sequentially by specifying the probability mass function for the time between two successive change-points. They presented a filtering recursion to compute the marginal likelihood of the observations under their model where the integrations over parameters and segmentations are also uncoupled. \cite{Fearnhead2007} showed that on-line and exact inference is also tractable in this model.

\subsection{Tree-structured Graphical Models}
\label{subsec:tree}

In a multivariate setting, graphical models are used to describe complex dependence structures between the involved variables. A graphical model is given by a graph, either directed or not, and a family of distributions satisfying some Markov property with respect to this graph. We concentrate our attention on undirected graphical models, also called Markov random fields. We refer the reader to \citep{Lauritzen1996} for a complete overview on the subject. Let $V = \{1,...,p\}$ and $Y = (Y_1,...,Y_p)$ be a random vector taking values in a product space $\mathcal{X} = \bigotimes_{i = 1}^p \mathcal{X}_i$. We consider the set $\T$ of connected undirected graphs with no cycles. These graphs are called spanning trees. For $T\in \T$, we let $E_T$ denote the edges of $T$.
%
%

We consider a hierarchical model where one successively draws a tree $T$ in $\T$, the parameters $\theta$ of a distribution that factorises according to $T$, and finally a random vector $Y$ according to this distribution. The marginal likelihood of the observations under this model, where both $\theta$ and $T$ are integrated out, is given by
\begin{align*}
p(y) = \sum_{T\in \T}p(T)\int p(y|T,\theta)p(\theta|T)d\theta.
\end{align*} 
These integrations can be performed exactly and efficiently by choosing the right priors on $T$ and $\theta$ \citep{Meila06,Schwaller2015}. The distribution on trees is taken to be factorised on the edges,
\begin{align}
p(T) = \frac{1}{Z(b)}\prod_{\{i,j\} \in E_T} b_{ij}, \label{tree_prior}
\end{align}
where $b_{ij}$ are non-negative edge weights and 
\begin{align}
Z(b) \defeq \sum_{T\in \T} \prod_{\{i,j\} \in E_T}b_{ij} \label{eq:Z}
\end{align}
is a normalizing constant. The prior on $\theta$ has to be specified for all trees in $\T$. The idea is to require each of these priors to factorise on the edges and to specify a prior on $\theta_{ij}$ once and for all trees, $\theta_{ij}$ designating the parameters governing the marginal distribution of $(Y_i,Y_j)$. These priors must be chosen coherently, in the sense that, for all $i,j,k \in V$, the priors on $\theta_{ik}$ and $\theta_{jk}$ should induce the same prior on $\theta_k$. Some local Markov property is also needed. \cite{Schwaller2015} especially detailed three frameworks in which this can be achieved, namely multinomial distributions with Dirichlet priors, Gaussian distributions with normal-Wishart priors and copulas. \LS{}{We elaborate a little more on the particular case of Gaussian graphical models (GGMs). In a multivariate Gaussian setting, $\theta = (\mu,\Lambda)$ where $\mu$ and $\Lambda$ respectively stand for the mean vector and precision matrix of the distribution. A classical result on GGMs states that if the $(i,j)$-th term of the precision matrix is equal to zero, there is no edge between nodes $i$ and $j$. Thus, the support of $p(\theta |T)$ is the set of sparse positive definite matrices whose non-zero terms are given by the adjacency matrix of $T$. The distribution of $\theta|T$ can be defined for all trees at once by using a general normal-Wishart distribution defined on all positive-definite matrices \citep[Sec. 4.1.3]{Schwaller2015}. Marginal distributions of this normal-Wishart distributions are used to build distributions for $\{\theta|T\}_{T\in \T}$.}

When $p(\theta|T)$ is carefully chosen, the integration on $\theta$ can be performed independently from the integration on $T$ and $p(y|T)$ factorises on the edges of $T$:
\begin{align*}
p(y|T) = \prod_{i\in V}p(y_i)\prod_{\{i,j\} \in E_T} \frac{p(y_i,y_j)}{p(y_i)p(y_j)}
\end{align*}
where
\begin{align}
p(y_i,y_j) & = \int p(y_i,y_j | \theta_{ij})d\theta_{ij}, \label{eq:local_likelihood} \\
p(y_i)  &= \int p(y_i| \theta_{ij})d\theta_{i}. \nonumber
\end{align}
Computing $\{p(y|T)\}_{T\in \T}$ only requires $p(p+1)/2$ computations of low-dimensional integrals, \blue{where $p$ is the dimension of the model}. As both $p(T)$ and $p(y|T)$ factorise on the edges, integrating the likelihood over $T$ can be performed in $O(p^3)$ time. 
\begin{proposition}
\label{prop:seg_LK} The marginal likelihood is given by
\begin{align*}
p(y) = \frac{Z(\omega)}{Z(b)}\cdot\prod_{i\in V} p(y_i)
\end{align*}
where $Z(\cdot)$ is defined as in (\ref{eq:Z}) and $\omega$ is the posterior edge weight matrix whose general term is given by
\begin{align}
\omega_{ij} \defeq b_{ij}\frac{p(y_i,y_j)}{p(y_i)p(y_j)}. \label{eq:omega}
\end{align}
Moreover, $p(y)$ is obtained in $O(p^3)$ time from $b$ and $\omega$.
\end{proposition}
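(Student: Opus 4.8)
The plan is to carry out the summation over trees directly, exploiting that both the prior $p(T)$ and the conditional likelihood $p(y|T)$ factorise over the same edge set $E_T$. Starting from $p(y)=\sum_{T\in\T}p(T)\,p(y|T)$ and inserting the prior (\ref{tree_prior}) together with the edge-factorisation of $p(y|T)$, the factor $\prod_{i\in V}p(y_i)$ is independent of $T$ and can be pulled out of the sum, along with the constant $1/Z(b)$. What remains inside the sum is, for each tree, the product over its edges of $b_{ij}\cdot p(y_i,y_j)/(p(y_i)p(y_j))$, which is exactly $\omega_{ij}$ by (\ref{eq:omega}). Hence
\begin{align*}
p(y)=\frac{\prod_{i\in V}p(y_i)}{Z(b)}\sum_{T\in\T}\prod_{\{i,j\}\in E_T}\omega_{ij},
\end{align*}
and by the definition (\ref{eq:Z}) of the normalising constant the residual sum is precisely $Z(\omega)$, which yields the claimed formula. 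The algebraic content is therefore a single substitution followed by regrouping; the only point requiring attention is that both factorisations range over identical edge sets, so the per-edge factors multiply cleanly.

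For the complexity claim, the exponential sum defining $Z(\omega)$ (and likewise $Z(b)$) over the $p^{p-2}$ spanning trees must be collapsed, and the natural tool is the Matrix-Tree Theorem. For a symmetric non-negative weight matrix $w$, it states that $Z(w)=\sum_{T\in\T}\prod_{\{i,j\}\in E_T}w_{ij}$ equals any cofactor of the weighted graph Laplacian $L(w)$, i.e. the determinant obtained by deleting any single row and the corresponding column, where $L(w)_{ij}=-w_{ij}$ off the diagonal and $L(w)_{ii}=\sum_{k\neq i}w_{ik}$. Each of $Z(\omega)$ and $Z(b)$ is thus the determinant of a $(p-1)\times(p-1)$ matrix, evaluable in $O(p^3)$ time by Gaussian elimination. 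Given $b$ and $\omega$, assembling the two Laplacians costs only $O(p^2)$, so the two determinant evaluations dominate and the total cost is $O(p^3)$.

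I expect the main obstacle to be nothing more than correctly invoking the weighted Matrix-Tree Theorem and verifying that it applies verbatim to arbitrary non-negative edge weights rather than to the usual $0/1$ combinatorial counting of spanning trees. Once that determinantal identity is in hand, the otherwise intractable sum over $p^{p-2}$ trees is handled exactly and in polynomial time, with no approximation involved.
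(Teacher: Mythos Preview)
Your proof is correct and follows essentially the same route as the paper: expand $p(y)=\sum_{T}p(T)p(y|T)$, pull out the tree-independent factors $\prod_i p(y_i)$ and $1/Z(b)$, recognise the remaining sum as $Z(\omega)$, and appeal to the Matrix-Tree theorem for the $O(p^3)$ complexity. The paper's version is terser (it does not spell out the Laplacian construction), but the argument is identical.
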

\begin{proof}
\begin{align*}
p(y) &= \sum_{T\in \T}p(y|T)p(T) \\
  & = \frac{1}{Z(b)}\left(\prod_{i\in V}p(y_i)\right)\sum_{T\in \T} \prod_{\{i,j\} \in E_T} b_{ij}\frac{p(y_i,y_j)}{p(y_i)p(y_j)} \\
  & = \frac{Z(\omega)}{Z(b)}\cdot\prod_{i\in V} p(y_i),
\end{align*}
with $\omega$ as defined above. As $Z(\cdot)$ can be computed in $O(p^3)$ time using the Matrix-Tree theorem, we get the announced complexity. \qed
\end{proof}
The posterior probability for an edge to belong to $T$, $P(\{i,j\} \in E_T | y)$, can also be obtained for all edges at once in $O(p^3)$ time \citep[Th. 3]{Schwaller2015}.

\section{Model \& Properties}
\label{sec:model}

Sections \ref{subsec:ppm} and \ref{subsec:tree} presented two models in which Bayesian inference requires us to integrate out a fundamentally discrete parameter (either the segmentation $m$ or the spanning tree $T$) and other (usually continuous) parameters $\theta$. In both situations, these integrations can be performed exactly and efficiently by uncoupling the problems. The integration over $\theta$ is performed ``locally" and the results are stored to be used in an algorithm that heavily relies on algebra to integrate over the discrete parameter. This is made possible by a careful choice of priors for both parameters. \blue{Our aim} is to show that these algebraic tricks can be combined to perform exact Bayesian inference of multiple change-points in the dependence structure of multivariate time-series.

\subsection{Model}
\label{subsec:model}

It is assumed that the observed data $y = \{y^{t}\}_{t=1}^N$ are a realisation of a multivariate random process $Y = \{Y^t\}_{t=1}^N$ of dimension $p \geqslant 2$. For $1 \leqslant t \leqslant N$, $Y^t = (Y^t_1,...,Y^t_p)$ is a multivariate random variable \blue{of dimension $p$} taking values in a product space $\XX = \bigotimes_{i=1}^p \XX_i$. We model $Y$ by a PPM where, \LS{}{at each time-point, observations $Y^t$ are modelled} by a tree-structured graphical model. If $m$ is a segmentation with $K$ segments, we let $\TT =\{T_k\}_{k=1}^K$ and $\theta =\{ \theta_r \}_{r\in m}$ respectively denote the tree structures and parameters for each segment. For $r\in m$, we also let $\kappa(r|m)$ denote the position of $r$ in $m$. 
Our model can then be written as follows:
\begin{align*}
p(m|K) & = \frac{1}{C_K(a)}\prod_{r\in m} a_r,\\
p(\TT | K) &= \prod_{k=1}^K p(T_k) = \frac{1}{Z(b)^K}\prod_{k=1}^K \prod_{\{i,j\} \in E_{T_k}}b_{ij},\\
p(\theta|m,\TT) & = \prod_{r\in m} p(\theta_r|T_{\kappa(r|m)}), & &\\
p(y|m,\theta,\TT ) & = \prod_{r\in m} \prod_{t\in r} p(y^t|T_{\kappa(r|m)},\theta_r).
\end{align*}
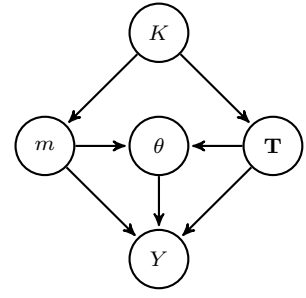
\begin{figure}[t]
\centering
\begin{tikzpicture}[->,>=stealth',shorten >=1pt,auto,node distance=1.5cm,
 		 thick,main node/.style={circle,fill=blue!0,draw,minimum size=22pt}];
 		 
 		 \tikzstyle{place}=[circle,draw=black!75,fill=black!10,minimum size=6mm];
 		 \node[main node] (1) { $\theta$}	; 
		\node[main node] (2)  [below of=1]{ $Y$};
		\node[main node] (3) [left of=1] { $m$};
		\node[main node] (4) [above of=1] { $K$};
  		\node[main node] (5) [right of=1] { $\TT$};
		\path[every node/.style={font=\sffamily\small}]
    		(4) edge node {} (3)
    		(4) edge node {} (5)
    		(3) edge node {} (1)
    		(5) edge node {} (1)
    		(1) edge node {} (2)
    		(3) edge node {} (2)
    		(5) edge node {} (2);
		\end{tikzpicture}
		\caption{Global graphical model. }
		\label{fig:global_model}
\end{figure}

For $r\in m$, $\{Y^t\}_{t\in r}$ are independent and identically distributed with structure $T_{\kappa(r|m)}$ and parameters $\theta_{r}$. The priors on $m$ and each of $T_k$ are respectively taken of the form given in  (\ref{eq:prior_segmentation}) and  (\ref{tree_prior}) through segment weights $a$ and edge weights $b$. 
{The distribution of $\theta_r|\{T_{\kappa(r|m)}=T\}$ is assumed to factorise over the edges of $T$, coherently between all \blue{spanning trees} $T\in \T$, as described in Section \ref{subsec:tree}.} A graphical representation of this model is given in Figure \ref{fig:global_model}.


\subsection{Factorisation Properties}

In the model that we have described, the marginal likelihood of the observation, conditional on $K$, is given by
\begin{align}
p(y|K) = \sum_{m\in \M_K} \sum_{\TT\in \T^{K}} \int p(y,m,\theta,\TT|K)d\theta.
\end{align}
Integrating out the discrete parameters $(m,\TT)$ requires to sum over a set of cardinality
\begin{align*}
|\M_K|\cdot |\T^K| = \binom{N-1}{K-1}\cdot p^{K(p-2)} \LS{}{\approx \left(\frac{Np^{p-2}}{K} \right)^{K}}.
\end{align*}
Nonetheless, the joint distribution of $(m,\theta,\TT)$, conditionally on $K$, factorises at different levels and integration can therefore be performed by combining the results given in Section \ref{sec:background}.

\LS{}{\begin{proposition}
The marginal likelihood $p(y|K)$ can be computed in $O(\max(K,p^3)N^2)$ time, \blue{where $p$ and $N$ respectively stand for the dimension of the model and the length of the series,} from the posterior edge weight matrices computed on all possible segments $r$, whose general terms are given by
\begin{align}
\omega^{(r)}_{ij} \defeq b_{ij}\frac{p(y^r_i,y^r_j)}{p(y^r_i)p(y^r_j)}. \label{eq:omega_r}
\end{align}
$p(y^r_i,y^r_j)$ and $p(y^r_i)$ are local integrals on $\theta$ computed on edges and vertices as defined in (\ref{eq:local_likelihood}).
\end{proposition}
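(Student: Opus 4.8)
The plan is to peel the integration apart level by level, exploiting the factorisations built into the prior, until what remains is a classical product partition model whose per-segment likelihood is exactly the tree-integrated quantity $p(y^r)$ of Proposition \ref{prop:seg_LK}. Concretely, I would start from
\begin{align*}
p(y|K) = \frac{1}{C_K(a)Z(b)^K}\sum_{m\in\M_K}\Big(\prod_{r\in m}a_r\Big)\sum_{\TT\in\T^K}\Big(\prod_{k=1}^K\prod_{\{i,j\}\in E_{T_k}}b_{ij}\Big) \\
\times \int\prod_{r\in m}\Big(\prod_{t\in r}p(y^t|T_{\kappa(r|m)},\theta_r)\Big)p(\theta_r|T_{\kappa(r|m)})\,d\theta,
\end{align*}
and first carry out the $\theta$-integration. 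Since both the likelihood and $p(\theta|m,\TT)$ factorise over segments, the integral splits as $\prod_{r\in m}p(y^r|T_{\kappa(r|m)})$, each factor being a segment-wise marginal likelihood conditional on its own tree.

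The decisive step is the summation over $\TT\in\T^K$. Because the trees attached to the $K$ segments are a priori independent and the map $r\mapsto\kappa(r|m)$ is a bijection between the segments of $m$ and the positions $1,\dots,K$, the joint sum over $K$-tuples of trees factorises into a product of independent per-segment sums:
\begin{align*}
\sum_{\TT\in\T^K}\prod_{k=1}^K\Big(\prod_{\{i,j\}\in E_{T_k}}b_{ij}\Big)\prod_{r\in m}p(y^r|T_{\kappa(r|m)}) = \prod_{r\in m}\sum_{T\in\T}\Big(\prod_{\{i,j\}\in E_T}b_{ij}\Big)p(y^r|T).
\end{align*}
Applying Proposition \ref{prop:seg_LK} segment by segment — with $p(y^r|T)$ factorising on the edges of $T$ and the edge weights $\omega^{(r)}_{ij}$ of (\ref{eq:omega_r}) — each inner sum collapses to $Z(\omega^{(r)})\prod_{i\in V}p(y^r_i)=Z(b)\,p(y^r)$. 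The $K$ resulting factors of $Z(b)$ cancel the $Z(b)^{-K}$ prefactor exactly, leaving the classical form $p(y|K)=C_K(a)^{-1}\sum_{m\in\M_K}\prod_{r\in m}a_r\,p(y^r)$.

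At this point the tree structure has been fully absorbed into the scalar segment likelihoods $p(y^r)$, and I would invoke Proposition \ref{prop:rigaill} directly: forming the matrix $A$ of (\ref{eq:A}) from these $p(y^r)$ and reading off $[A^K]_{1,N+1}=C_K(a)\,p(y|K)$, with $C_K(a)$ recovered by the same proposition applied to the weights $a$ alone. For the complexity I would count two contributions. First, assembling every entry $A_{s,t}$ requires one evaluation of $Z(\omega^{(r)})$ per segment; by the Matrix-Tree theorem each costs $O(p^3)$, and there are $O(N^2)$ segments, giving $O(p^3N^2)$. Second, the dynamic-programming computation of $[A^K]_{1,N+1}$ (and of $C_K(a)$) costs $O(KN^2)$ by Proposition \ref{prop:rigaill}. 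Summing yields the stated $O(\max(K,p^3)N^2)$.

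The only genuinely delicate point is the tree-summation factorisation: one must check that, once the $\theta$-integral has been taken, a sum of a product indexed by a $K$-tuple of trees truly separates into a product of one-tree sums. This rests on two facts that I would state explicitly — the prior independence of the $K$ trees encoded in $p(\TT|K)$, and the fact that each segment's likelihood depends on $\TT$ only through the single tree at its own position — after which interchanging the finite sum and product is routine. Everything downstream is a direct citation of Propositions \ref{prop:seg_LK} and \ref{prop:rigaill}.
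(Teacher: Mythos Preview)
Your proposal is correct and follows essentially the same route as the paper: factorise over segments to reduce $p(y,m|K)$ to $\frac{1}{C_K(a)}\prod_{r\in m}a_r\,p(y^r)$, compute each $p(y^r)$ via Proposition~\ref{prop:seg_LK} applied to $\omega^{(r)}$, and then invoke Proposition~\ref{prop:rigaill} on the resulting matrix $A$, with the same $O(p^3N^2)+O(KN^2)$ complexity accounting. The paper's version is simply terser---it asserts the segment-wise independence of $(T_{\kappa(r|m)},\theta_r)$ and the resulting product form directly, whereas you write out the full triple sum/integral and peel off the $\theta$-integration and the $\TT$-summation explicitly; both arguments rest on the same two cited propositions.
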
}

\begin{proof}

\blue{For any segmentation $m\in \M_K$ of $\llbracket 1;N \rrbracket$ into $K$ segments,} $\{(T_{\kappa(r|m)},\theta_r)\}_{r\in m}$ are independent, so that $p(y,m|K)$ can be written as
\begin{align*}
p(y,m|K) = \frac{1}{C_K(a)}\prod_{r\in m}a_r p(y^r),
\end{align*}
where $p(y^r)$ stands for the locally integrated likelihood of $y^r$ on segment $r$,
\begin{align}
p(y^r) = \sum_{T\in \T}p(T)\int \left( \prod_{t \in r} p(y^t|T,\theta) \right) p(\theta|T)d\theta.
\end{align}
Thus, $p(y,m|K)$ satisfies the factorability assumption required by \cite{Rigaill2012} and once the weighted segment likelihood matrix $A$, \LS{}{defined by
\begin{align*}
A_{s,t} = \left\lbrace\begin{array}{ll}
a_{\llbracket s;t\llbracket}\cdot p(y^{\llbracket s;t\llbracket}) & \textrm{if}~1\leqslant s<t \leqslant n+1, \\ 
0 & \textrm{otherwise},
\end{array} \right. 
\end{align*}}
is computed, Proposition \ref{prop:rigaill} can be used to gain access to $p(y|K)$ in $O(KN^2)$ time.

Computing matrix $A$ requires to integrate the likelihood over \blue{tree structure} $T$ and \blue{parameters} $\theta$ for all possible segments $r \subseteq\llbracket 1 ; N \rrbracket$. On each segment, we fall back to the tree-structured model described  in Section \ref{subsec:tree} and the integrated likelihood can be expressed using the local terms computed on vertices and edges that were defined in (\ref{eq:local_likelihood}). \LS{}{Indeed, for $r \subset \llbracket 1 ; N \rrbracket$, $p(y^r)$ is obtained through Proposition \ref{prop:seg_LK} applied to $\omega^{(r)}$ (defined in (\ref{eq:omega_r})):
\begin{align*}
p(y^r) = \frac{Z(\omega^{(r)})}{Z(b)}\cdot\prod_{i\in V} p(y^r_i).
\end{align*}
\blue{where we remind that $Z(\cdot)$ is the function giving the normalising constant of a tree distribution.} As a consequence, $A$ is computed in $O(p^3N^2)$ time from \blue{the posterior edge weight matrices} $\{ \omega^{(r)} \}_{r \subseteq \llbracket 1 ;N\rrbracket}$, hence the total complexity.} \qed
\end{proof}

The components of matrices $\omega^{(r)}$ result from the integration over $\theta$, which can be made separately and locally thanks to the assumptions made on its prior distribution in Section \ref{subsec:model}. This integration comes down to remove node $\theta$ in the global graphical model displayed in Figure \ref{fig:global_model}. \\
\LS{}{Marginal likelihood is only one of many quantities than  might be of concern in this model. Yet, once matrix $A$ has been 
calculated, other quantities of interest with respect to our model can be obtained at low cost. The next section provides a non-exhaustive list of such quantities.}

\section{Quantities of Interest} 
\label{sec:QoI}

\subsection{Change-point Location}
\label{subsec:changepoint}

For $m\in \M_K$, we let $1 = \tau_0 < \tau_1 < \dots < \tau_K = N$ denote the change-points of $m$ and, for $1 \leqslant k \leqslant K$, we let $r_k = \llbracket \tau_{k-1} ; \tau_k \llbracket$ denote its $k$-th segment. In this section we are interested in computing the posterior probabilities of the following subsets of $\M_K$,
\begin{align*}
\mathcal{B}_{K,k}(t) &\defeq \{m\in \M_K |  \tau_k =t \}, \\
\mathcal{B}_{K}(t) & \defeq \bigcup_{k=1}^K\mathcal{B}_{K,k}(t), \\
\mathcal{S}_{K,k}(\llbracket s ; t\llbracket) &\defeq \{m\in \M_K |  r_k = \llbracket s ; t \llbracket \} \\
\mathcal{S}_{K}(\llbracket s ; t \llbracket) & \defeq \bigcup_{k=1}^K\mathcal{S}_{K,k}(\llbracket s ; t \llbracket).
\end{align*}
Subsets $\mathcal{B}_{K}(t)$ and $\mathcal{S}_{K}(\llbracket s ; t \llbracket)$ are respectively the set of segmentations having a change-point at time $t$ and the set of segmentations containing segment $\llbracket s ; t \llbracket$. We let $B_{K,k}(t)$, $B_{K}(t)$, $S_{K,k}(\llbracket s ; t \llbracket)$ and  $S_{K}(\llbracket s ; t \llbracket)$ denote the respective posterior probabilities of these subsets.

\cite{Rigaill2012} showed that, with the convention that $\left[A^0\right]_{t1,t2} = 1$ for all $1\leqslant t_1<t_2 \leqslant N+1$, these probabilities could be expressed as
\begin{align*}
B_{K,k}(t) & = \frac{\left[A^{k}\right]_{1,t}\left[A^{K-k}\right]_{t,N+1}}{\left[A^{k}\right]_{1,N+1}}, \\
B_{K}(t) &= \sum_{k=1}^{K-1} B_{K,k}(t), \\
S_{K,k}(\llbracket s ; t \llbracket) & =  \frac{\left[A^{k-1}\right]_{1,s}A_{s,t}\left[A^{K-k}\right]_{t,N+1}}{\left[A^{k}\right]_{1,N+1}}, \\
S_{K}(\llbracket s ; t \llbracket) &= \sum_{k=1}^K S_{K,k}(\llbracket s ; t \llbracket).
\end{align*}
$\{B_{K,k}(t)\}_{t=1}^N$ provides the exact posterior distribution of the $k$-th change-point when $m$ has $K$ segments. Posterior segment probabilities $\{S_{K}(\llbracket s ; t \llbracket)\}_{1 \leqslant s < t \leqslant N+1}$ will be useful in the following.

Once $\{B_K(t)\}_{K\geqslant 2}$ is computed, the posterior probability  $B(t)$ of a change-point occurring at time $t$ integrated on $K$ is obtained as
\begin{align*}
B(t) = P(\cup_{K\geqslant 2}\mathcal{B}_K(t)|y) = \sum_{K\geqslant 2}p(K|y)B_K(t).
\end{align*}
The computation of the posterior distribution on $K$ is addressed below.

\subsection{Number of Segments}

The posterior distribution on $K$ can also be derived from Proposition \ref{prop:rigaill}.
\begin{proposition}
\begin{align*}
p(K|y) \propto \frac{p(K)[A^K]_{1,N+1}}{C_K(a)}.
\end{align*}
\end{proposition}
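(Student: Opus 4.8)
The plan is to compute the posterior on $K$ directly from Bayes' rule, $p(K|y) \propto p(K)\,p(y|K)$, and then to substitute the expression for the marginal likelihood $p(y|K)$ that is available from Proposition \ref{prop:rigaill}. The key observation is that $[A^K]_{1,N+1}$ is not quite $p(y|K)$ itself but rather a weighted sum over segmentations, so I need to keep track of the normalising constant $C_K(a)$ that relates the two.

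First I would recall from Proposition \ref{prop:rigaill} and the remark immediately following it that
\begin{align*}
[A^K]_{1,N+1} = \sum_{m\in \M_K}\prod_{r\in m} a_r\, p(y^r).
\end{align*}
Next I would write out $p(y|K)$ using the product partition structure of the model. Since $p(m|K) = C_K(a)^{-1}\prod_{r\in m} a_r$ and the segment likelihoods factorise, marginalising over $m$ gives
\begin{align*}
p(y|K) = \sum_{m\in \M_K} p(m|K)\prod_{r\in m} p(y^r) = \frac{1}{C_K(a)}\sum_{m\in \M_K}\prod_{r\in m} a_r\, p(y^r).
\end{align*}
Comparing the two displays immediately yields $p(y|K) = [A^K]_{1,N+1}/C_K(a)$, exactly the relation $[A^K]_{1,N+1} = C_K(a)\cdot p(y|K)$ already stated after Proposition \ref{prop:rigaill}.

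Finally I would combine this with Bayes' rule. Writing $p(K|y) = p(K)\,p(y|K)/p(y)$ and treating $p(y)=\sum_{K}p(K)p(y|K)$ as a constant of proportionality independent of $K$, substitution of the expression for $p(y|K)$ gives
\begin{align*}
p(K|y) \propto p(K)\,p(y|K) = \frac{p(K)\,[A^K]_{1,N+1}}{C_K(a)},
\end{align*}
which is the claimed formula. I do not expect any genuine obstacle here, since every ingredient is already in place; the only point requiring care is the bookkeeping of the normalising constant $C_K(a)$, making sure it appears in the denominator because it is the factor by which the raw path sum $[A^K]_{1,N+1}$ overcounts relative to the properly normalised marginal likelihood $p(y|K)$.
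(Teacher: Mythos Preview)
Your proposal is correct and follows exactly the same approach as the paper's own proof, which simply invokes Bayes' rule $p(K|y)\propto p(K)p(y|K)$ and then uses the identity $p(y|K)=[A^K]_{1,N+1}/C_K(a)$ from Proposition~\ref{prop:rigaill}. You have merely spelled out the intermediate steps in more detail.
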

\begin{proof}
Bayes' rule states that $p(K|y) \propto p(K)p(y|K)$ and by Proposition \ref{prop:rigaill}, $p(y|K) = [A^K]_{1,N+1}/C_K(a)$. \qed
\end{proof}

The best segmentation \textit{a posteriori} can also be recovered efficiently by using matrix $A$ \blue{in the Segment Neighbourhood
Search algorithm} \citep{Auger1989}. Thus, if one's interest lies in retrieving the number of segments $K$, two estimators can be considered
\begin{align*}
\hat{K}_1 & = \argmax_{K}p(K|y), \\
\hat{K}_2 &= K(\argmax_{m}p(m|y)). 
\end{align*}
where $K(m)$ stands for the number of segments in $m$.
\subsection{Posterior Edge Probability}

\blue{For any segment $r \subseteq \llbracket 1;N\rrbracket$, it is possible to compute the posterior edge probabilities corresponding to segment $r$:
\begin{align*}
&P(\{i,j\} \in E_{T} | y^r), &\forall \{i,j\} \in \mathcal{P}_2(V),
\end{align*}
where $T$ is a random tree distributed as $T_1, \dots, T_K$.
Whenever $m$ is unknown, the segmentation can be integrated out to obtain instant posterior edge probabilities at any given time $t$. Conditionally on $K$, the instant posterior appearance probability of edge $\{i,j\}$ at time $t$ can be written as}
\begin{align*}
\p_{ij}^K(t) \defeq \sum_{m\in \M_K}p(m|y,K)P(\{i,j\} \in E_{T_{\kappa(t|m)}} | y,m),
\end{align*}
where $\kappa(t|m)$ gives the position of the segment containing $t$ in $m$.
\begin{proposition} The \blue{instant} posterior probability of edge $\{i,j\}$ \blue{at time t} is given by
\begin{align} 
\p_{ij}^K(t) = \sum_{r\ni t} S_K(r) P(\{i,j\} \in E_T | y^r). \label{eq:posterior_edge_matrix}
\end{align}
$\{ p_{ij}^K(t) \}_{1 \leqslant i,j \leqslant p \atop 1 \leqslant t \leqslant N}$ can be computed in $O\left(\max(K,p^3)N^2\right)$ time from $A$ and $\{\omega^{(r)}\}_{r\subseteq \llbracket 1;N\rrbracket}$.
\end{proposition}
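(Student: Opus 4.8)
The plan is to prove the statement in two stages: first establish the closed form (\ref{eq:posterior_edge_matrix}), then verify the announced running time. For the formula, the key idea is to reorganise the defining sum over segmentations according to which segment of $m$ contains the time-point $t$. First I would note that for any $m \in \M_K$ there is a unique segment $r = r_{\kappa(t|m)}$ with $t \in r$, so that $\M_K$ decomposes as the disjoint union $\bigsqcup_{r \ni t}\{m \in \M_K : r \in m\}$; moreover, since $r \ni t$, a segmentation having $r$ as one of its segments necessarily has $r$ as \emph{the} segment containing $t$, so $\{m \in \M_K : r \in m\} = \mathcal{S}_K(r)$. Regrouping the definition of $\p_{ij}^K(t)$ along this partition gives
\begin{align*}
\p_{ij}^K(t) = \sum_{r \ni t}\ \sum_{m \in \mathcal{S}_K(r)} p(m\mid y,K)\, P\!\left(\{i,j\} \in E_{T_{\kappa(r|m)}} \mid y, m\right).
\end{align*}

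The crucial simplification comes from the conditional independence built into the model of Section \ref{subsec:model}: conditionally on $m$, the pairs $\{(T_{\kappa(r|m)},\theta_r)\}_{r \in m}$ are independent and $y^r$ depends only on $(T_{\kappa(r|m)},\theta_r)$. Consequently the posterior of the tree on segment $r$ factorises away from the rest of the data, so that $P(\{i,j\} \in E_{T_{\kappa(r|m)}} \mid y, m)$ depends neither on the remainder of $m$ nor on the observations outside $r$, and equals the single-segment posterior edge probability $P(\{i,j\} \in E_T \mid y^r)$ of Section \ref{subsec:tree}. Pulling this term out of the inner sum and using $\sum_{m \in \mathcal{S}_K(r)} p(m\mid y,K) = S_K(r)$ delivers (\ref{eq:posterior_edge_matrix}). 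I would treat this factorisation as the conceptual heart of the argument; the rest is bookkeeping.

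For the complexity, the plan is to precompute the two families of factors separately and then combine them cheaply. From $A$, Proposition \ref{prop:rigaill} yields $\mathcal{A}_K$ in $O(KN^2)$, from which each posterior segment probability $S_K(r)$ follows in $O(K)$ via the formula of Section \ref{subsec:changepoint}, hence $O(KN^2)$ over all $O(N^2)$ segments. From each matrix $\omega^{(r)}$, the single-segment edge probabilities $\{P(\{i,j\} \in E_T \mid y^r)\}_{i,j}$ are obtained in $O(p^3)$ by the Matrix-Tree-based result of \citep[Th. 3]{Schwaller2015}, giving $O(p^3 N^2)$ overall. Forming the products $Q_{ij}(r) \defeq S_K(r)\, P(\{i,j\} \in E_T \mid y^r)$ then costs $O(p^2 N^2)$.

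The main obstacle is the final summation: evaluating $\p_{ij}^K(t) = \sum_{r \ni t} Q_{ij}(r)$ naively runs over the $\Theta(N^2)$ segments containing a given $t$, producing an $O(p^2 N^3)$ cost that breaks the target bound. The remedy I would use is to view, for each fixed edge $\{i,j\}$, the values $Q_{ij}(\llbracket s;u\llbracket)$ as entries of an array indexed by $(s,u)$, and to recognise $\p_{ij}^K(t)$ as the sum over the rectangle $\{s \leqslant t,\ u > t\}$, which is automatically a valid range since $s \leqslant t < t+1 \leqslant u$. Building one two-dimensional cumulative-sum table per edge in $O(N^2)$ then returns every such rectangular sum in $O(1)$, costing $O(p^2 N^2)$ to build and $O(p^2 N)$ to read off all values. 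Summing the contributions gives $O(KN^2) + O(p^3 N^2) + O(p^2 N^2) = O(\max(K,p^3)N^2)$, as claimed.
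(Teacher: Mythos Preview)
Your argument is correct and follows the same route as the paper: the formula is obtained by regrouping the sum over $\M_K$ according to the segment containing $t$ and using the segment-wise conditional independence to reduce $P(\{i,j\}\in E_{T_{\kappa(t|m)}}\mid y,m)$ to $P(\{i,j\}\in E_T\mid y^r)$; the complexity is split into $O(KN^2)$ for the $S_K(r)$ and $O(p^3N^2)$ for the per-segment edge probabilities. Your treatment of the final aggregation $\sum_{r\ni t}Q_{ij}(r)$ via two-dimensional prefix sums is in fact more careful than the paper, which simply asserts that the two preprocessing costs bound the total without discussing this last step.
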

\begin{proof}
This formula is similar to the one giving the posterior mean of the signal in \citep{Rigaill2012}. If $r\in m$ and $t\in r$, then $P(\{i,j\} \in E_{T_{\kappa(t|m)}} | y,m) = P(\{i,j\} \in E_T | y^r)$, hence the result. $\{S_K(r)\}_{r\in \llbracket 1 ; N \rrbracket}$ is obtained with complexity $O(KN^2)$ and $\{P(\{i,j\} \in E_T | y^r)\} _{r\in \llbracket 1 ; N \rrbracket}$ with complexity $O(p^3N^2)$, and that gives an upper bound on total complexity. \qed
\end{proof}

One could be interested in computing the posterior probability for an edge to keep the same status throughout time when $m$ is integrated out, given $K$. Nonetheless, it would require to integrate on subsets of $\M_K\otimes \T^K$ that are in direct contradiction with the factorability assumption, making the results that we have presented so far useless. \LS{}{Indeed, we would effectively be introducing dependency between segments, thus breaking up the factorability of $p(y,m)$ with respect to $r\in m$. In this situation, Proposition \ref{prop:rigaill} can no longer be used. A drastic workaround is to work under a fixed segmentation instead of integrating out $m$, and this is what we do in the following section.}

\section{Edge Status \& Structure Comparisons}
\label{sec:edge_status}

{We now turn to the specific case where }$m$ is known and has $K$ segments $(r_1,\dots,r_K)$. {This situation is far less general than the framework we considered until now. Still, it corresponds to some practical situations where segment comparison is interesting and for which further exact inference can be carried out.}

\subsection{Edge Status Comparison} 
Let $i,j$ be two distinct nodes in $V$.

 We are interested in computing the posterior probability of the subsets of $\T^K$ defined by
\begin{align*}
\E_{ij}^+ & = \{\blue{\TT = (T_1, \dots , T_K)}| \forall k \in \llbracket 1;K \rrbracket, \{i,j\} \in E_{T_k} \}, \\
\E_{ij}^- & = \{ \blue{\TT = (T_1, \dots , T_K)}| \forall k \in \llbracket 1;K \rrbracket, \{i,j\} \notin E_{T_k} \}, \\
\E_{ij} &= \E_{ij}^+ \cup \E_{ij}^- ,
\end{align*}
that respectively correspond to the situations where edge $\{i,j\}$ is always present, always absent, or has the same status in all trees. If $\TT$ belongs to $\overline{\E_{ij}} = \T^K \setminus \E_{ij}$, it means that there exists two segments in which $\{i,j\}$ does not have the same status. We let $(q_0^-,\overline{q_0},q_0^+)$  respectively denote the prior probabilities of $\E_{ij}^-$, $\overline{\E_{ij}}$ and $\E_{ij}^+$. These probabilities can be written as
\begin{align*}
q_0^- & = \prod_{k=1}^K P(\{i,j\} \notin E_{T_k}) = P(\{i,j\} \notin E_{T})^K, \\
 q_0^+ & = P(\{i,j\} \in E_{T})^ K, \hspace{0.5cm} \overline{q_0}  = 1 -  q_0^- - q_0^+,
\end{align*}
where $T$ is a tree distributed as $T_1,\dots,T_K$,
and are obtained for all edges at once in $O(p^3)$ time by computing the prior edge probability matrix $(P(\{i,j\} \notin E_{T}))_{1\leqslant i \leqslant j \leqslant p}$.

 Posterior probabilities $(q^-,\overline{q},q^+)$ for $\E_{ij}^-$, $\overline{\E_{ij}}$ and $\E_{ij}^+$ can be computed similarly but one posterior edge probability matrix has to be calculated per segment:
 \begin{align*}
q^- & = \prod_{k=1}^K P(\{i,j\} \notin E_{T_k}|y^{r_k}) , \\
 q^+ & = \prod_{k=1}^K P(\{i,j\} \in E_{T_k}|y^{r_k}), \hspace{0.5cm} \overline{q}  = 1 -  q^- - q^+,
\end{align*}

However, if the prior distribution on trees is not strongly peaked, as events $\E_{ij}^+$ and $\E_{ij}^-$ only account for a relatively small number of tree series in $\T^K$, $q_0^-$ and $q_0^+$ (as well as $q^-$ and $q^+$) will always be very small. To allow some control on the prior probabilities of these events, we use a random variable $\epsilon_{ij}$ taking values $\{-1;0;1\}$ with probabilities $(\lambda^-,\overline{\lambda},\lambda^+)$ and  explicitly controlling the status of edge $\{i,j\}$ in all trees:
\begin{align*}
p(\TT|\epsilon_{ij}) = \left\lbrace \begin{array}{ll}
p(\TT|\E_{ij}^+) & \textrm{if}~\epsilon_{ij} = 1, \\ 
p(\TT|\overline{\E_{ij}}) & \textrm{if}~\epsilon_{ij} = 0, \\ 
p(\TT|\E_{ij}^-) & \textrm{if}~\epsilon_{ij} = -1.
\end{array} \right.
\end{align*}
We obtain the model described in Figure \ref{fig:model_edge_comp}, in which
\begin{align*}
p(y) = \lambda^+p(y|\E_{ij}^+) + \lambda^-p(y|\E_{ij}^-) + \overline{\lambda}p(y|\overline{\E_{ij}}).
\end{align*}

\begin{figure}[t]
\centering
		\begin{tikzpicture}[->,>=stealth',shorten >=1pt,auto,node distance=1cm,
 		 thick,main node/.style={circle,fill=none,draw,font=\small,minimum size=18pt,inner sep=0pt}]
 		 
 		 \tikzstyle{place}=[circle,draw=none,fill=white,minimum size=5pt,font=\tiny]; 

 		\node[main node,fill=black!10] (1) {\large $\epsilon_{ij}$};
		\node[main node] (2) [below of=1] {$T_k$};
		\node[main node] (3) [below of=2] {$\theta_k$};
		\node[main node] (4) [below of=3] { $Y^{r_k}$};
		
		\node[place] (5) [right of=2] {$...$};
  		\node[place] (6) [right of=3] {$...$};
  		\node[place] (7) [right of=4] {$...$};
  		
  		\node[main node] (8) [right of=5] {$T_K$};
  		\node[main node] (9) [right of=6] {$\theta_K$};
  		\node[main node] (10) [right of=7] { $Y^{r_K}$};
  		
  		\node[place] (l5) [left of=2] {$...$};
  		\node[place] (l6) [left of=3] {$...$};
  		\node[place] (l7) [left of=4] {$...$};
  		
  		\node[main node] (l8) [left of=l5] {$T_1$};
  		\node[main node] (l9) [left of=l6] {$\theta_1$};
  		\node[main node] (l10) [left of=l7] { $Y^{r_1}$};

		\path[every node/.style={font=\sffamily\small}]
    		(1) edge node {} (2)
        		edge node {} (8)
        		edge node {} (l8)
    		(2) edge node {} (3)
    		(3) edge node {} (4)
    		(8) edge node {} (9)
    		(9) edge node {} (10)
    		(l8) edge node {} (l9)
    		(l9) edge node {} (l10);
		\end{tikzpicture}
\caption{Model for edge status comparison.}
\label{fig:model_edge_comp}
\end{figure}
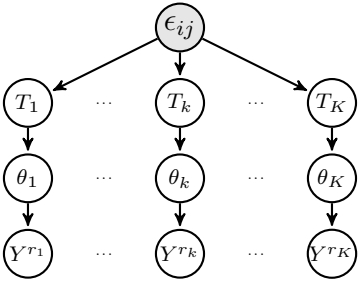

\begin{proposition}
\label{prop:edge_status}
The vector of posterior probabilities for $\epsilon_{ij}$ is proportional to $\left(\lambda^-\frac{q^-}{q_0^-},\overline{\lambda}\frac{\overline{q}}{\overline{q_0}},\lambda^+\frac{q^+}{q_0^+} \right)$.
\end{proposition}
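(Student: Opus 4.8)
The plan is to apply Bayes' rule to $\epsilon_{ij}$ and reduce everything to the three conditional marginal likelihoods appearing in the mixture $p(y) = \lambda^+ p(y|\E_{ij}^+) + \lambda^- p(y|\E_{ij}^-) + \overline{\lambda}\, p(y|\overline{\E_{ij}})$. Since the prior on $\epsilon_{ij}$ is $(\lambda^-,\overline{\lambda},\lambda^+)$, the posterior vector is proportional to $\left(\lambda^- p(y|\E_{ij}^-),\; \overline{\lambda}\, p(y|\overline{\E_{ij}}),\; \lambda^+ p(y|\E_{ij}^+)\right)$, so it suffices to show that each conditional likelihood equals the corresponding ratio $q/q_0$ up to one common multiplicative constant that does not depend on the value of $\epsilon_{ij}$.

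First I would treat the two ``homogeneous'' events $\E_{ij}^+$ and $\E_{ij}^-$. Because the segmentation $m$ is fixed and the trees $T_1,\dots,T_K$ are a priori independent with the data factorising across segments, conditioning on $\E_{ij}^+ = \{\{i,j\}\in E_{T_k},\ \forall k\}$ amounts to constraining each $T_k$ separately. The constrained sum therefore factorises, giving $p(y|\E_{ij}^+) = (q_0^+)^{-1}\prod_{k=1}^K p(y^{r_k},\{i,j\}\in E_{T_k})$, where $p(y^{r_k},\{i,j\}\in E_{T_k})$ denotes the joint marginal of the data on segment $r_k$ and of the event that the edge is present on that segment. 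A segment-wise application of Bayes' rule rewrites each factor as $P(\{i,j\}\in E_{T_k}\,|\,y^{r_k})\cdot p(y^{r_k})$, so that $p(y|\E_{ij}^+) = (q^+/q_0^+)\prod_k p(y^{r_k})$, and symmetrically $p(y|\E_{ij}^-) = (q^-/q_0^-)\prod_k p(y^{r_k})$. The common factor $C \defeq \prod_k p(y^{r_k})$ is what will eventually cancel in the proportionality.

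The main obstacle is the event $\overline{\E_{ij}} = \T^K\setminus\E_{ij}$, which is a complement and hence does \emph{not} factorise over segments, so the previous argument cannot be applied directly. I would circumvent this by complementation: the unconstrained sum over $\T^K$ equals $\prod_k p(y^{r_k}) = C$ by exactly the same segment-wise factorisation, while the contributions of $\E_{ij}^+$ and $\E_{ij}^-$ to this sum are $q^+ C$ and $q^- C$ respectively (from the previous step). Subtracting yields $p(y|\overline{\E_{ij}}) = (\overline{q_0})^{-1}\left(C - q^+ C - q^- C\right) = (\overline{q}/\overline{q_0})\,C$, using $\overline{q} = 1 - q^+ - q^-$. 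Substituting the three expressions into the posterior and dropping the common constant $C$ then gives the announced proportionality $\left(\lambda^-\frac{q^-}{q_0^-},\, \overline{\lambda}\frac{\overline{q}}{\overline{q_0}},\, \lambda^+\frac{q^+}{q_0^+}\right)$.
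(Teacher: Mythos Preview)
Your proof is correct and follows the same Bayes' rule approach as the paper, with the common constant $C=\prod_k p(y^{r_k})$ being precisely the unconditional $p(y)$ under the fixed-$m$ model. Note that your subtraction argument for $\overline{\E_{ij}}$ is unnecessary: the identity $p(y\mid\E)=P(\E\mid y)\,p(y)/P(\E)$ holds for \emph{any} measurable event $\E\subseteq\T^K$, factorisable or not, so $p(y\mid\overline{\E_{ij}})=(\overline{q}/\overline{q_0})\,C$ follows directly without invoking complementation.
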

\begin{proof}We have that
\begin{align*} 
p(\epsilon_{ij} = 1 | y)  &= \lambda^+\frac{p(y|\E_{ij}^+)}{p(y)} \\
 & = \frac{\lambda^+p(y|\E_{ij}^+)}{\lambda^+p(y|\E_{ij}^+) + \lambda^-p(y|\E_{ij}^-) + \overline{\lambda}p(y|\overline{\E_{ij}})} \\
  & = \frac{\lambda^+\frac{q^+}{q_0^+}}{\lambda^+\frac{q^+}{q_0^+} + \lambda^-\frac{q^-}{q_0^-} + \overline{\lambda}\frac{\overline{q}}{\overline{q_0}}}.
\end{align*}
We reason similarly with $p(\epsilon_{ij} = -1 | y)$ to get the result. \qed
\end{proof}

\subsection{Structure Comparison}

The same reasoning can be applied for the global event
\begin{align*}
\E & = \{ \blue{\TT = (T_1, \dots , T_K)} | \exists T \in \T, \forall k \in \llbracket 1;K \rrbracket, T_k = T \}{,}
\end{align*}
{which corresponds to a constant dependency structure across all segments (we remind that, in this section, the segments are known \textit{a priori}), with possible changes for the parameters.}
The prior probability of $\E$ is given by
\begin{align*}
q_0 \defeq P(\E) = \frac{1}{Z(b)^K}\sum_{T\in \T} \prod_{\{i,j\} \in E_T} b_{ij}^K = \frac{Z(b^{\odot K})}{Z(b)^K},
\end{align*}
where $b^{\odot K}$ stands for the element-wise $K$-th power of matrix $b$. On each segment $r_k$, the posterior distribution on trees factorises as
\begin{align*}
p(T_k|y^{r_k}) = \frac{1}{Z(\omega^{(k)})}\prod_{\{i,j\} \in T_k} \omega_{ij}^{(k)},
\end{align*}
and the posterior probability of $\E$ is therefore given by
\begin{align*}
q \defeq \sum_{T\in \T}\prod_{k=1}^K p(T|y^{r_k}) = \frac{Z\left(\bigodot_k \omega^{(k)}\right)}{\prod_k Z(\omega^{(k)})}
\end{align*}
where $\bigodot$ denotes the element-wise matrix product.

Just as in the edge status comparison, we let a binary variable $\epsilon \sim \mathcal{B}(\pi)$ control the prior probability of $\E$, with $p(\TT|\epsilon = 1) = p(\TT | \E)$, and derive a similar formula for the posterior distribution of $\epsilon$.

\begin{proposition} $\epsilon | y \sim \mathcal{B}(\pi^*)$
with $\pi^* \defeq \frac{\pi \frac{q}{q_0}}{\pi \frac{q}{q_0} + (1-\pi)\frac{1-q}{1-q_0}}$.
\end{proposition}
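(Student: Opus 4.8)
The plan is to mirror the three-way argument used for Proposition~\ref{prop:edge_status}, now with the binary selector $\epsilon$ replacing the ternary $\epsilon_{ij}$. First I would write down the marginal likelihood under the $\epsilon$-augmented model. Since $\epsilon \sim \mathcal{B}(\pi)$ and $p(\TT|\epsilon=1) = p(\TT|\E)$, while the complementary value forces $\TT \in \overline{\E}$, the law of total probability gives $p(y) = \pi\,p(y|\E) + (1-\pi)\,p(y|\overline{\E})$. Bayes' rule then yields $p(\epsilon=1|y) = \pi\,p(y|\E)\big/\big(\pi\,p(y|\E) + (1-\pi)\,p(y|\overline{\E})\big)$, so everything reduces to expressing the two conditional likelihoods in terms of $q$ and $q_0$.

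The key step is to relate these conditional likelihoods to the prior and posterior probabilities of $\E$ computed in the base model (the model of Section~\ref{subsec:model} without $\epsilon$). In that base model the trees $T_1,\dots,T_K$ are drawn independently, so Bayes' rule applied to the event $\E$ gives $P(\E|y) = p(y|\E)\,P(\E)/p(y)$, i.e.\ $p(y|\E)/p(y) = q/q_0$, where $p(y)$ is the base-model marginal likelihood. The same computation applied to the complementary event, using $P(\overline{\E}) = 1-q_0$ and $P(\overline{\E}|y) = 1-q$, gives $p(y|\overline{\E})/p(y) = (1-q)/(1-q_0)$. Dividing numerator and denominator of the Bayes expression by $p(y)$ and substituting these two ratios produces exactly $\pi^*$, whence $\epsilon|y \sim \mathcal{B}(\pi^*)$.

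The main thing to verify carefully --- and the only genuine obstacle --- is the identification of $q$ with the base-model posterior probability $P(\E|y)$ and of $q_0$ with $P(\E)$. For $q_0$ this is immediate from the factorised tree prior: summing $\prod_k P(T_k = T)$ over $T \in \T$ reproduces $Z(b^{\odot K})/Z(b)^K$. For $q$ I would invoke the fact that, conditionally on the data and the (fixed) segmentation, the per-segment trees are posterior-independent with $T_k \sim p(\cdot|y^{r_k})$; hence $P(\E|y) = \sum_{T\in\T}\prod_{k=1}^K p(T|y^{r_k})$, which is precisely the stated $q$. Once this is in place, the substitution of the two likelihood ratios is routine and the conclusion follows by the same algebra as in Proposition~\ref{prop:edge_status}.
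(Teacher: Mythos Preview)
Your proposal is correct and follows essentially the same route as the paper, which simply states that the proof is similar to that of Proposition~\ref{prop:edge_status}. The additional care you take in identifying $q_0$ and $q$ with the base-model prior and posterior probabilities of $\E$ is a useful clarification but does not depart from the intended argument.
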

\begin{proof}
Similar to Proposition \ref{prop:edge_status}. \qed
\end{proof}

\section{Simulations}
\label{sec:simulation}


Our approach was especially concerned with explicitly modelling the structure of the graphical model within each segment, but a simpler model could be considered in which the structure remains implicit. In a Gaussian setting, that would mean that the precision matrix governing the distribution on a given segment would be drawn without any zero-term constraints. One goal of this simulation study is to show how both models (with and without structure constraints) comparatively behave when one is interested in retrieving the number of segments or the location of the change-points.

Another concern addressed by these simulations is the cost of the tree assumption when the true model is not tree-structured. How well can the number of segments, the change-points or even the structures be recovered when the true networks are not trees?

\subsection{Simulation Scheme}

\begin{figure*}[b]
\vspace{0.1cm}
\begin{minipage}{0.04\textwidth}
~
\end{minipage}
\begin{minipage}{0.32\textwidth}
\centering \hspace{0.3cm}Tree
\end{minipage}
\begin{minipage}{0.32\textwidth}
\centering \hspace{0.3cm}Erd\"{o}s-R\'{e}nyi, $p_C = 2/p$
\end{minipage}
\begin{minipage}{0.32\textwidth}
\centering \hspace{0.3cm}Erd\"{o}s-R\'{e}nyi, $p_C = 4/p$
\end{minipage}

\includegraphics[width=0.04\textwidth]{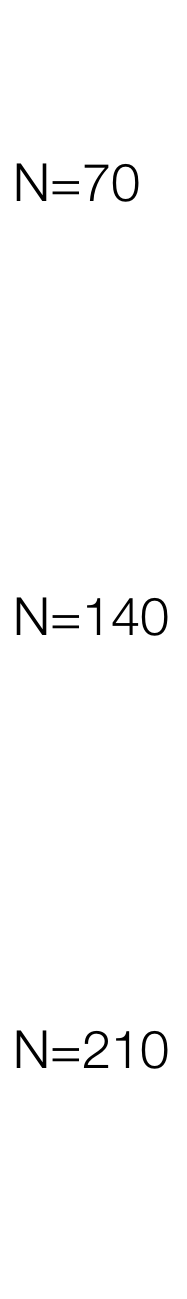}
\includegraphics[width=0.32\textwidth]{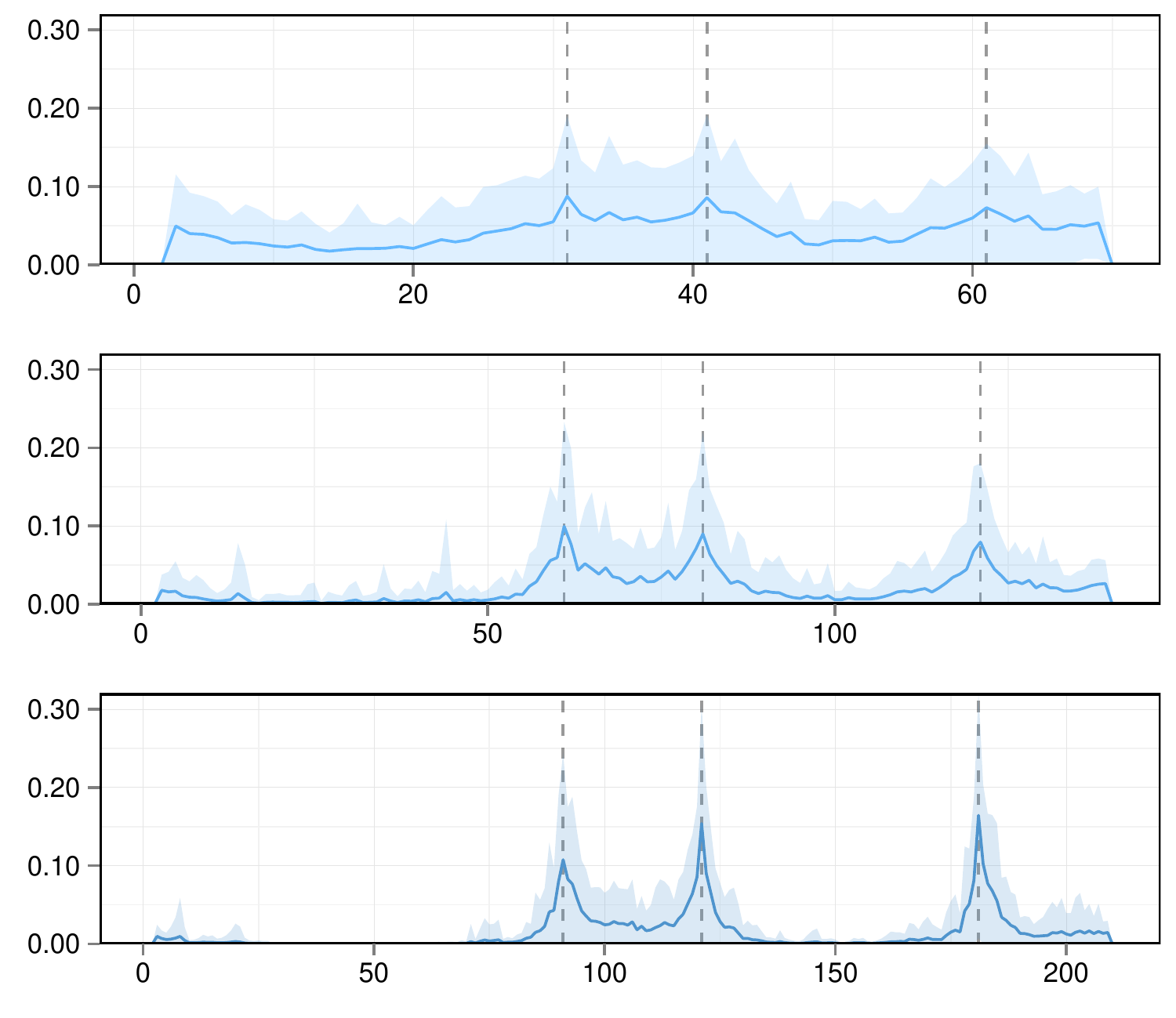}
\includegraphics[width=0.32\textwidth]{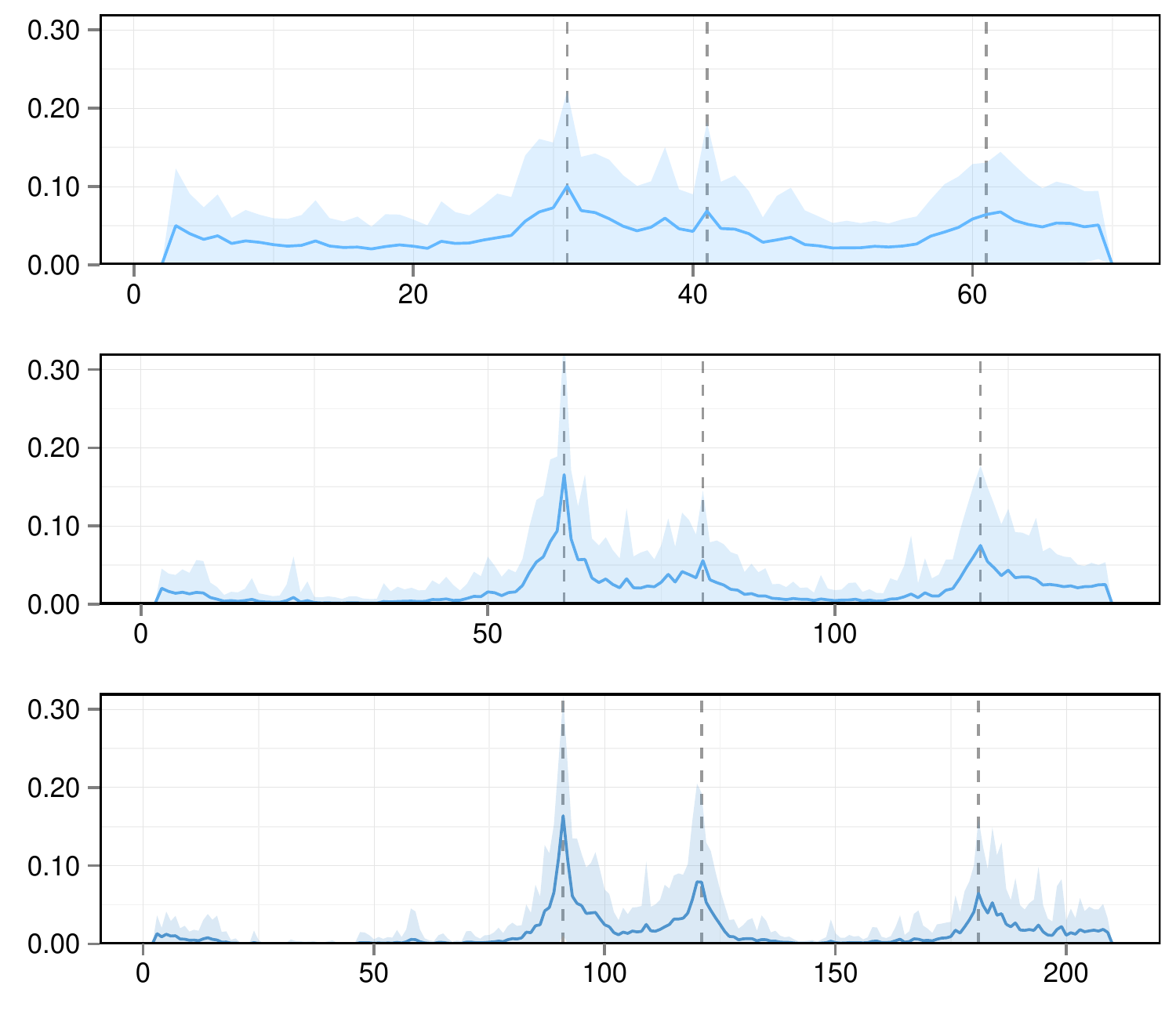}
\includegraphics[width=0.32\textwidth]{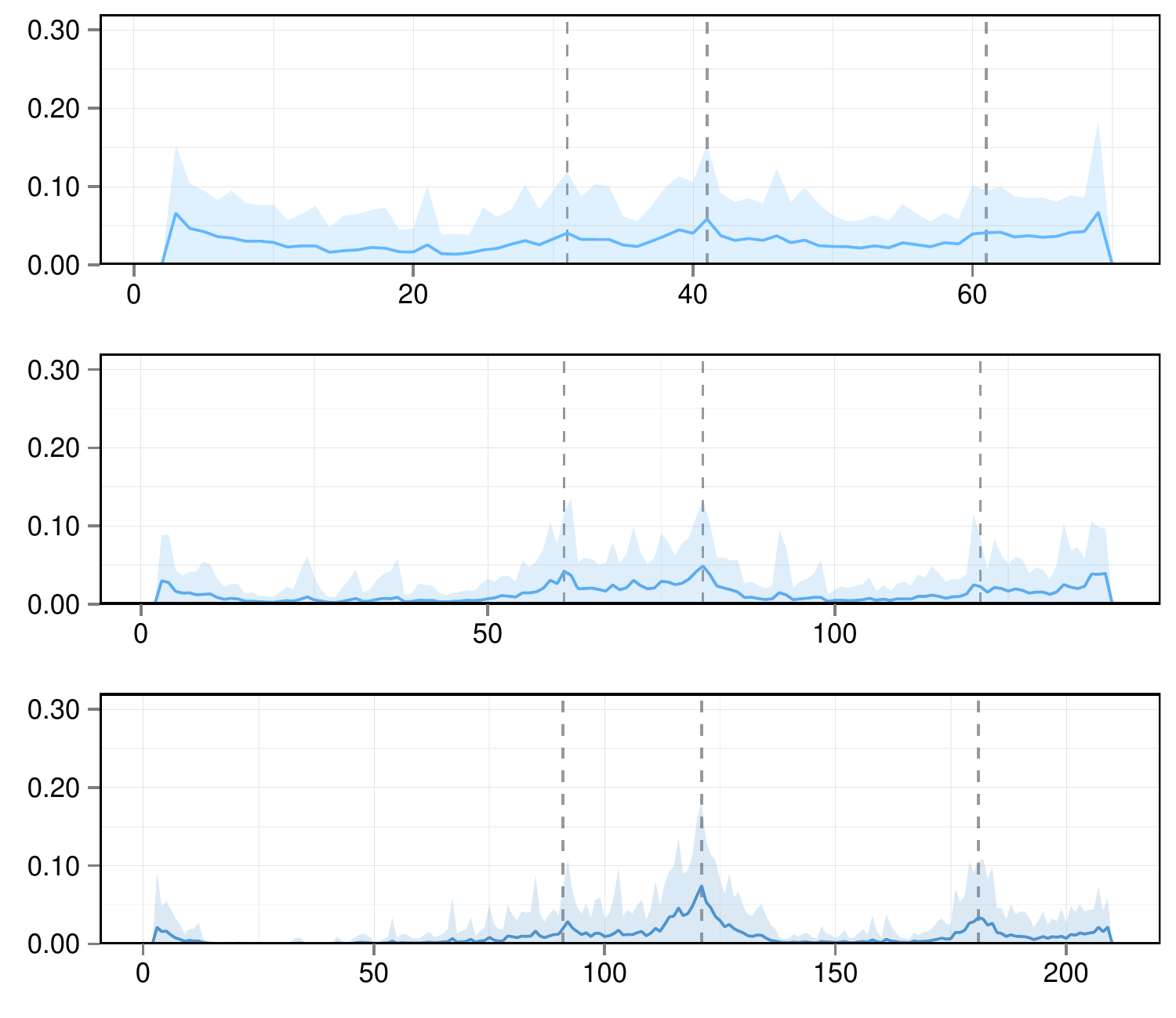}

\vspace{-0.1cm}

\includegraphics[width=0.04\textwidth]{sample_size.pdf}
\includegraphics[width=0.32\textwidth]{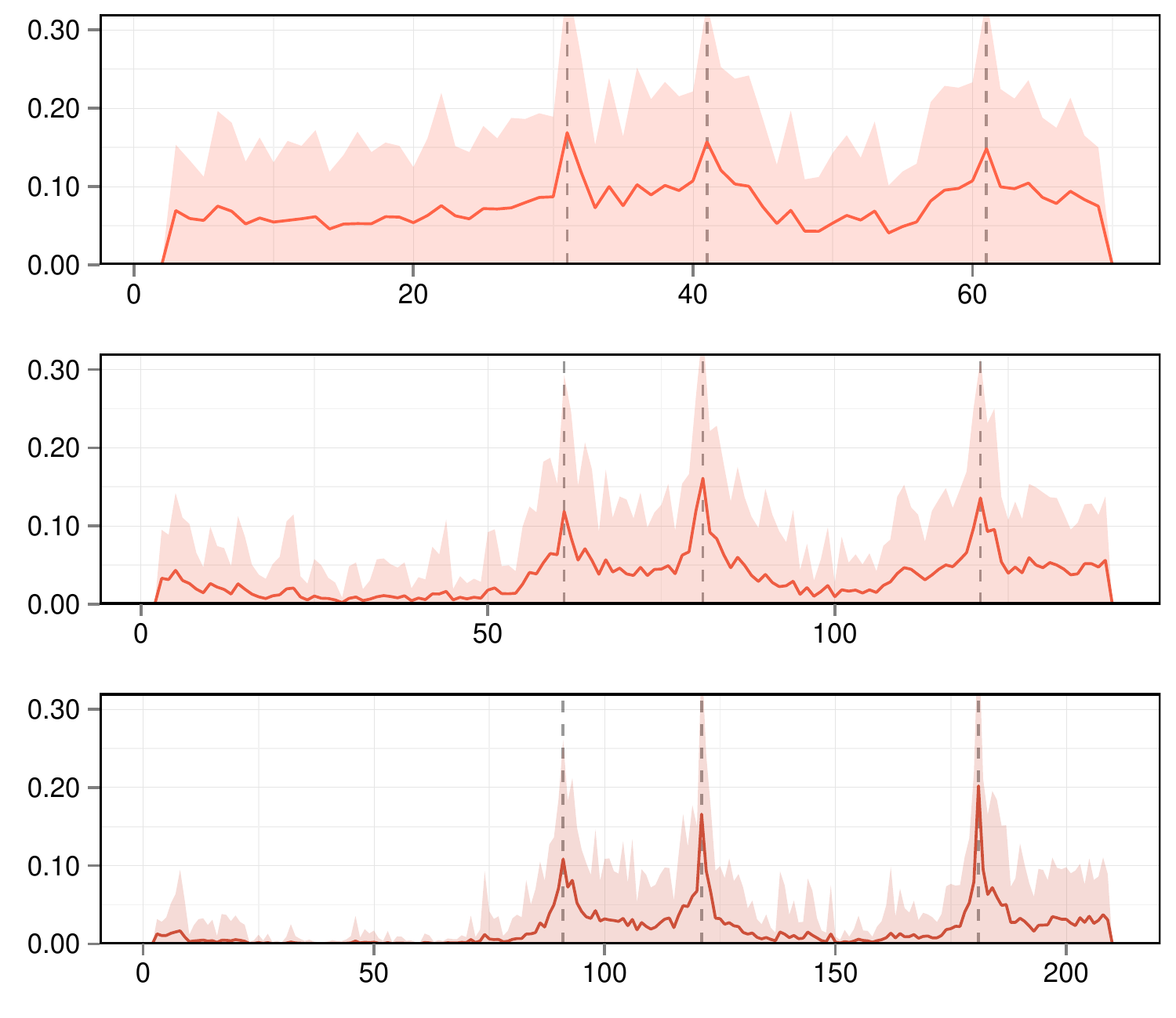}
\includegraphics[width=0.32\textwidth]{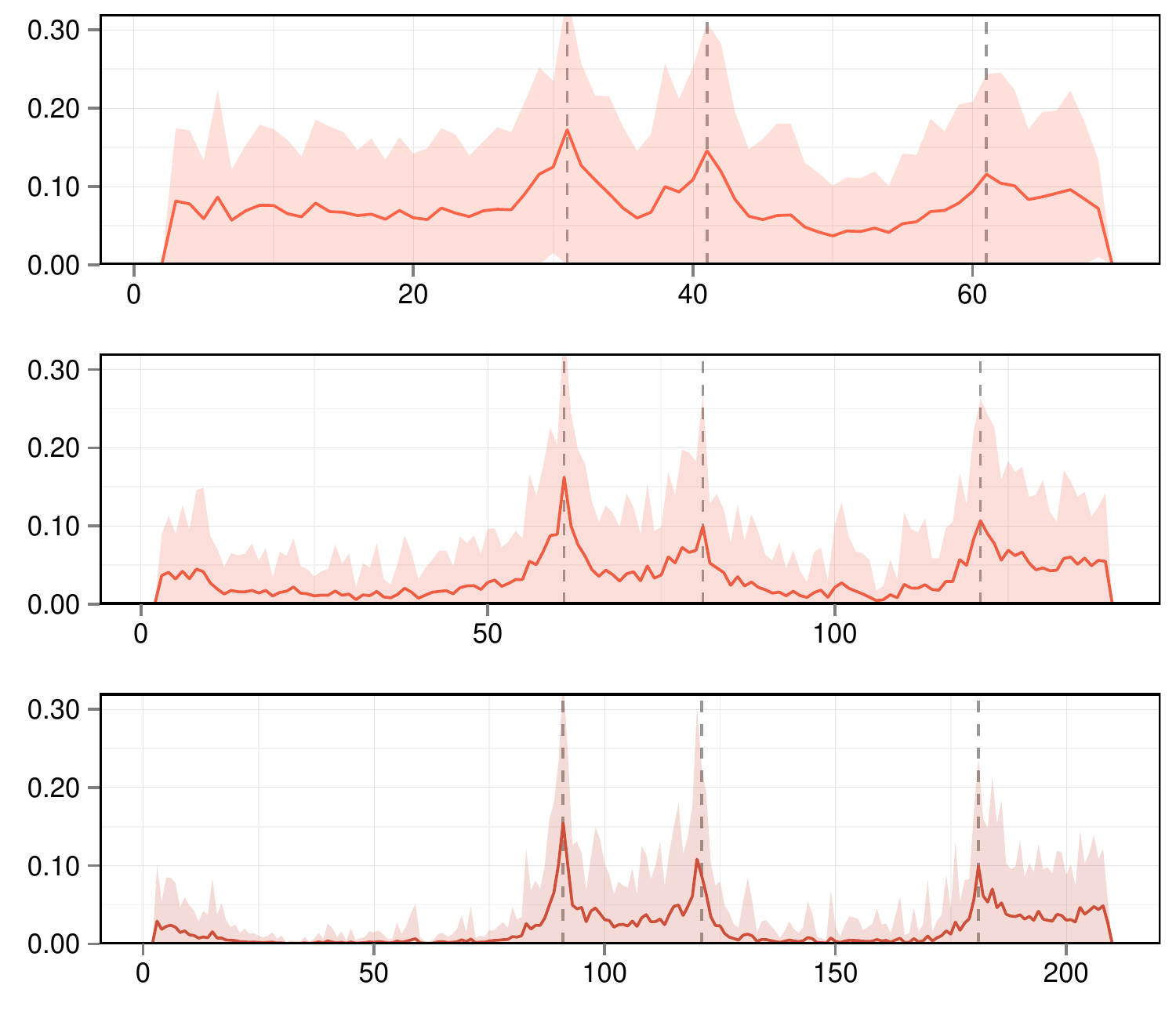}
\includegraphics[width=0.32\textwidth]{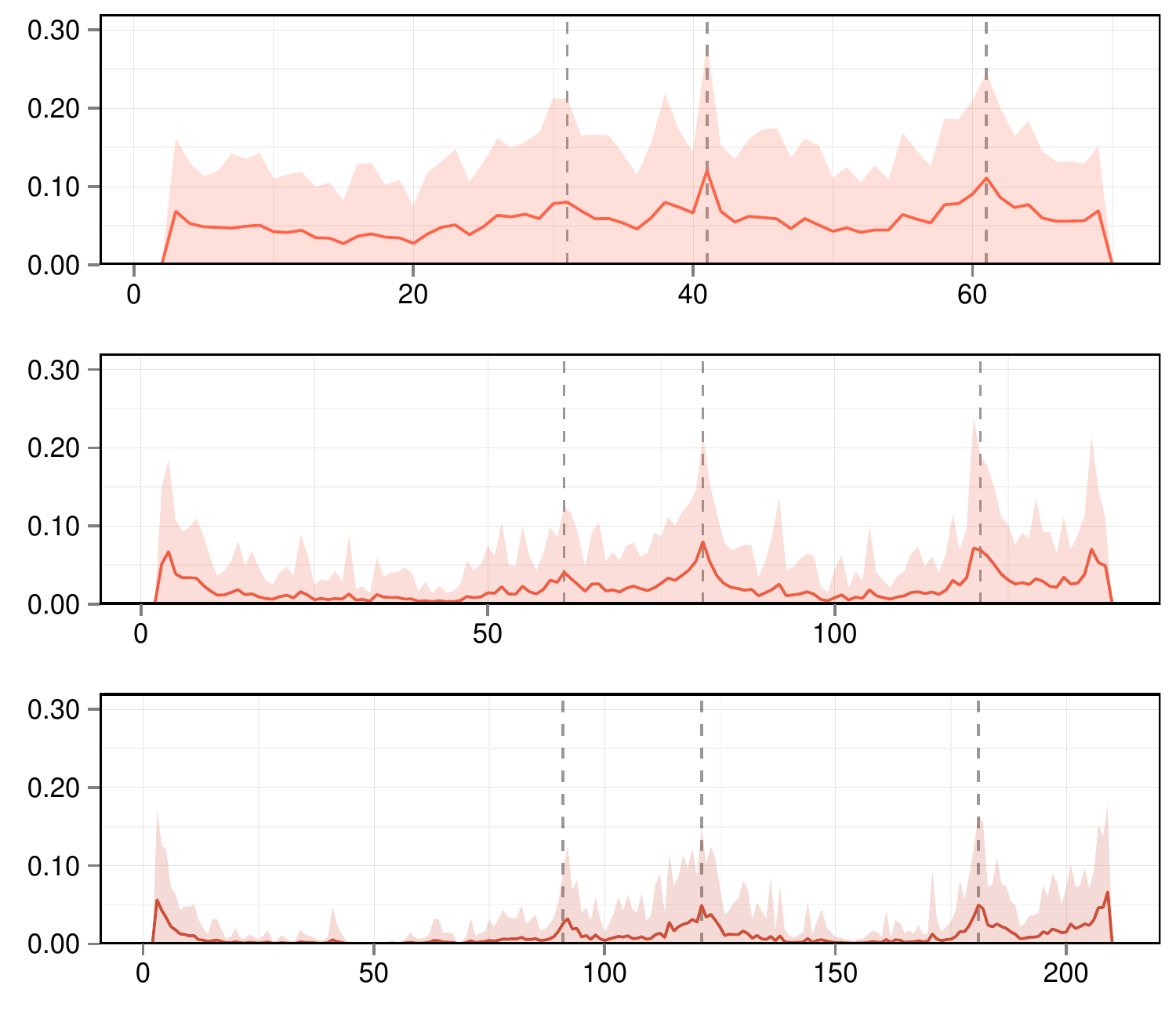}

\vspace{-0.2cm}

\begin{minipage}{0.04\textwidth}
~
\end{minipage}
\begin{minipage}{0.32\textwidth}
\centering \hspace{0.3cm}Tree
\end{minipage}
\begin{minipage}{0.32\textwidth}
\centering \hspace{0.3cm}Erd\"{o}s-R\'{e}nyi, $p_C = 2/p$
\end{minipage}
\begin{minipage}{0.32\textwidth}
\centering \hspace{0.3cm}Erd\"{o}s-R\'{e}nyi, $p_C = 4/p$
\end{minipage}

\caption{Posterior probability of observing a change-point for the tree-structured model (blue) and for the full model (red). The curve represents the mean value obtained from the 100 samples and the ribbon gives the standard deviation. 
}
\label{fig:simu_chg_pt}
\end{figure*}

\begin{figure*}[p]
\vspace{2cm}

\includegraphics[width=0.04\textwidth]{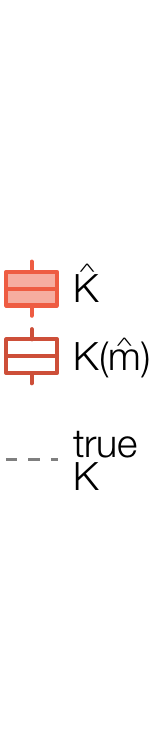}
\includegraphics[width=0.32\textwidth]{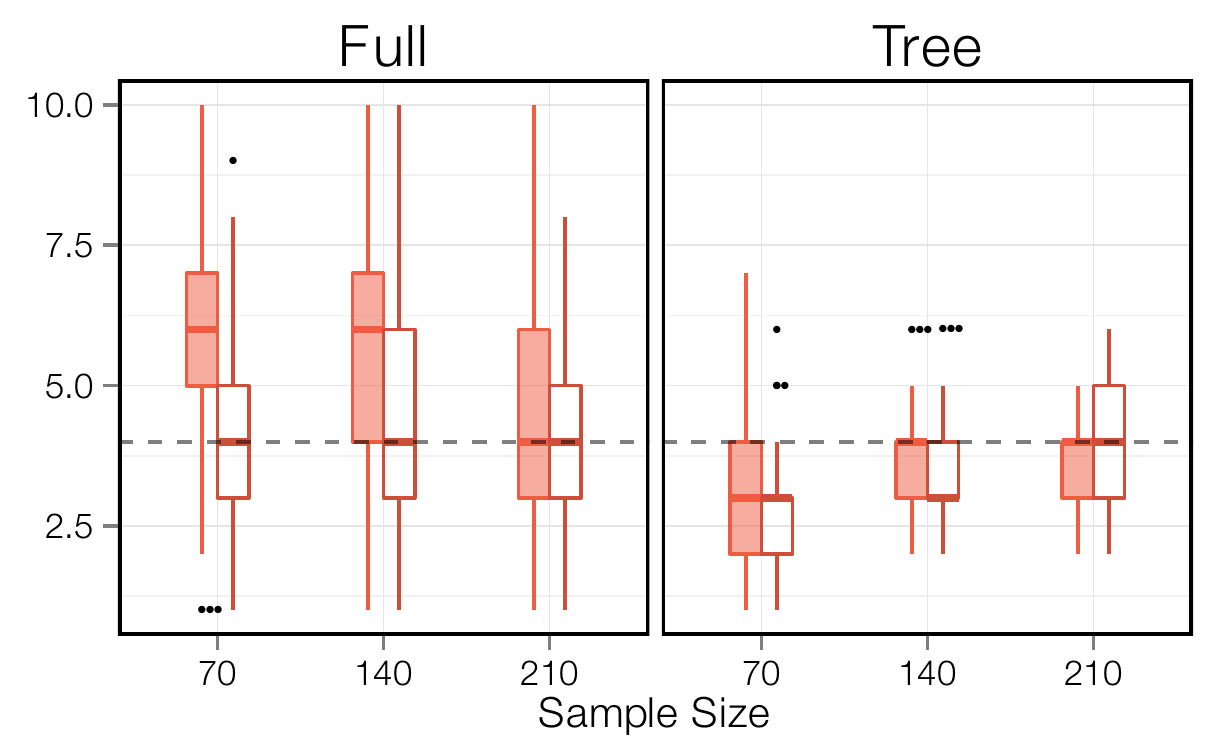}
\includegraphics[width=0.32\textwidth]{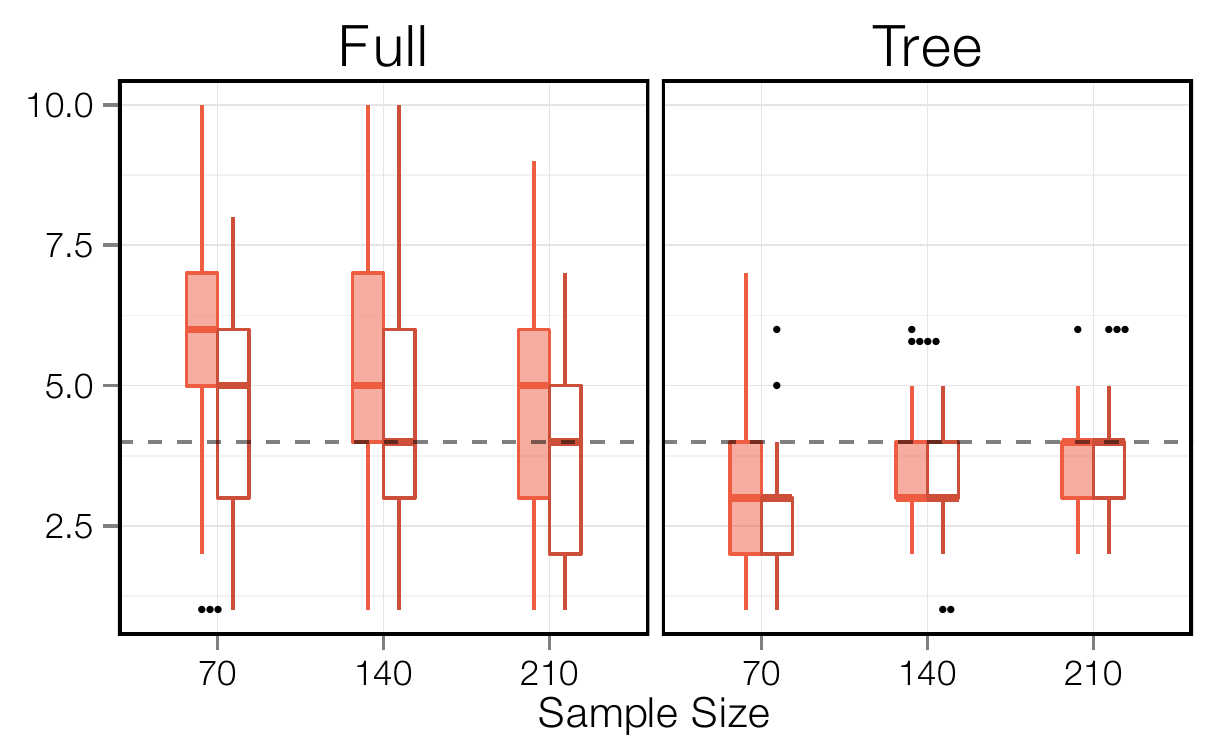}
\includegraphics[width=0.32\textwidth]{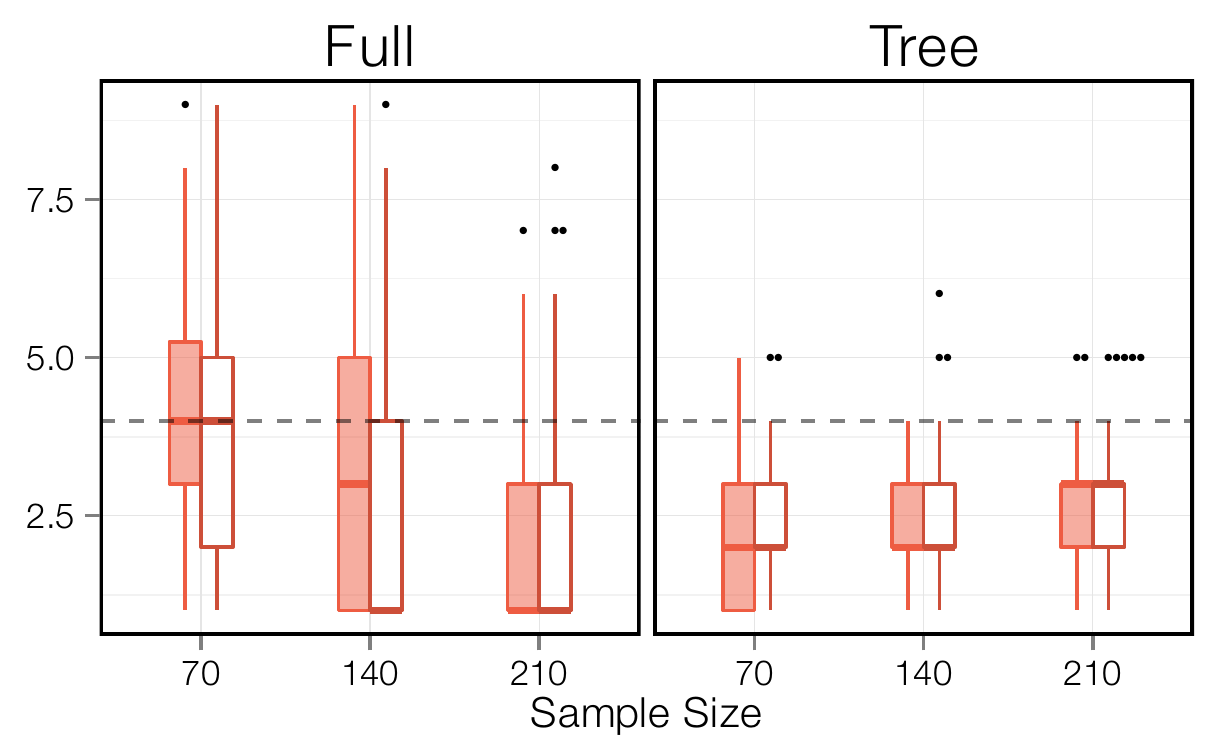}

\begin{minipage}{0.04\textwidth}
~
\end{minipage}
\begin{minipage}{0.32\textwidth}
\centering Tree
\end{minipage}
\begin{minipage}{0.32\textwidth}
\centering Erd\"{o}s-R\'{e}nyi, $p_C = 2/p$
\end{minipage}
\begin{minipage}{0.32\textwidth}
\centering Erd\"{o}s-R\'{e}nyi, $p_C = 4/p$
\end{minipage}

\vspace{-0.2cm}

\caption{Boxplot of $\hat{K} = \argmax_K p(K|y)$ and $K(\hat{m}) = K(\argmax_m p(m|y))$ against sample size $N$ for the full model (Full) and the tree-structured model (Tree).}
\label{fig:simu_K}
\end{figure*}

\begin{figure*}[p]

\includegraphics[width=0.04\textwidth]{sample_size.pdf}
\includegraphics[width=0.32\textwidth]{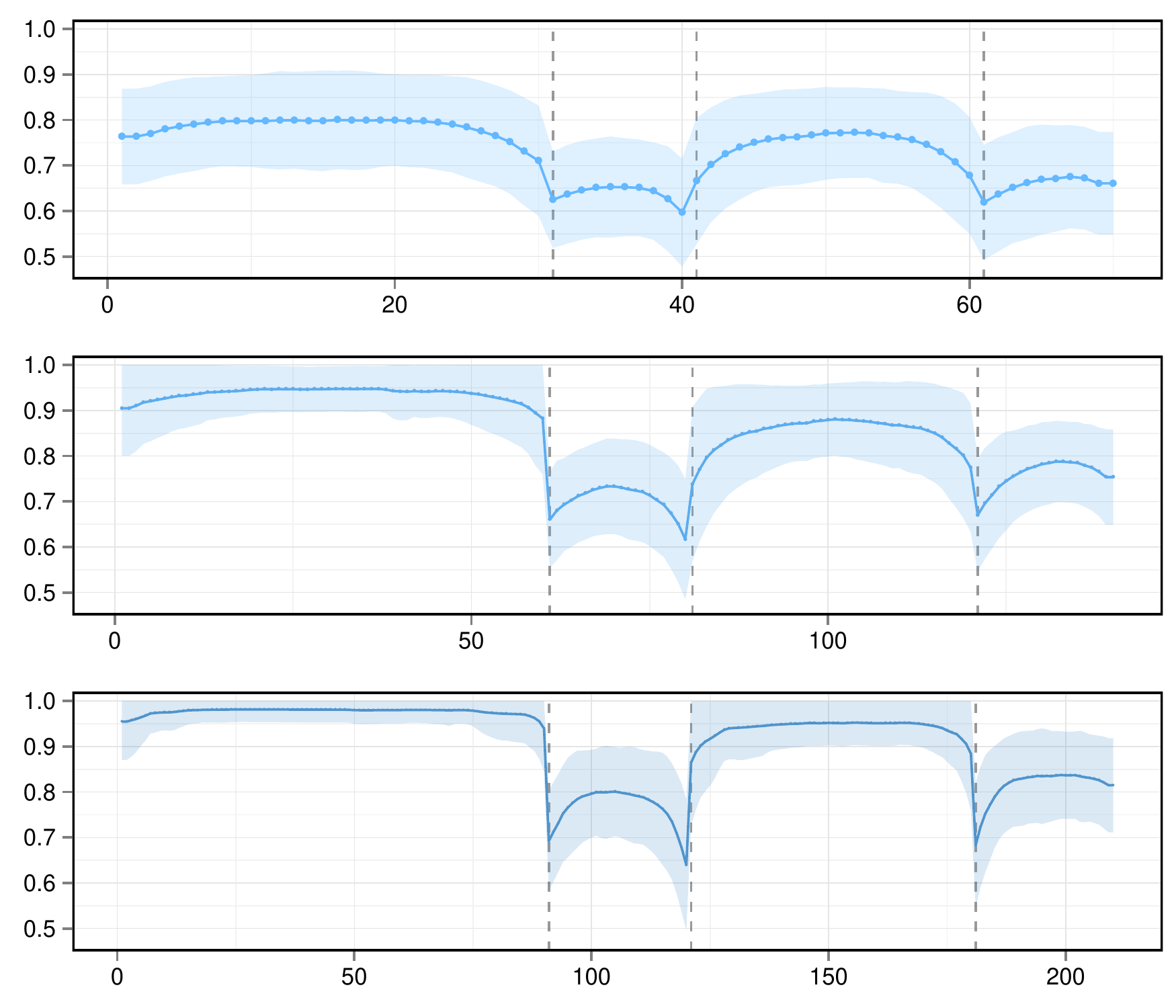}
\includegraphics[width=0.32\textwidth]{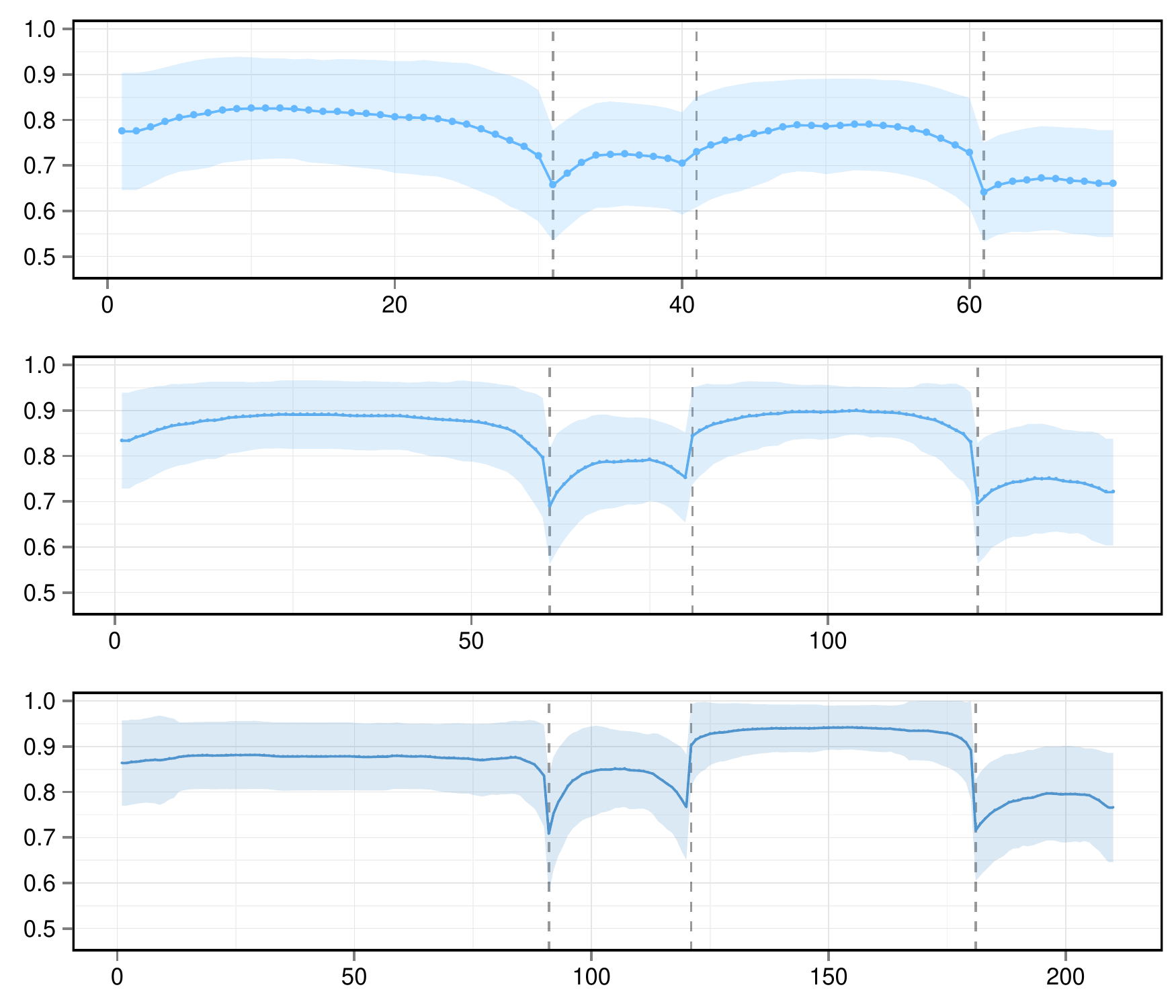}
\includegraphics[width=0.32\textwidth]{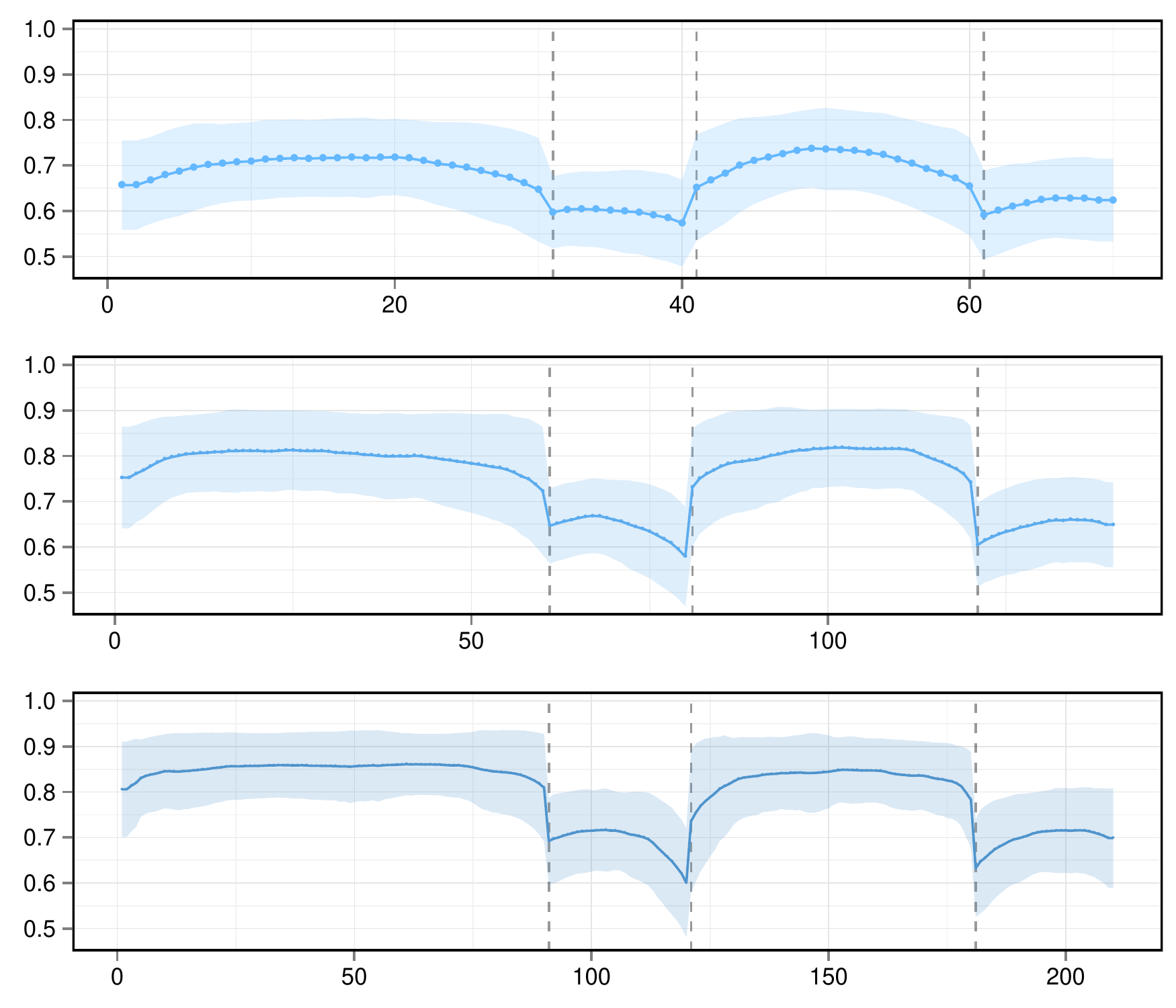}

\begin{minipage}{0.04\textwidth}
~
\end{minipage}
\begin{minipage}{0.32\textwidth}
\centering \hspace{0.3cm}Tree
\end{minipage}
\begin{minipage}{0.32\textwidth}
\centering \hspace{0.3cm}Erd\"{o}s-R\'{e}nyi, $p_C = 2/p$
\end{minipage}
\begin{minipage}{0.32\textwidth}
\centering \hspace{0.3cm}Erd\"{o}s-R\'{e}nyi, $p_C = 4/p$
\end{minipage}

\caption{Area under the ROC curve computed for the posterior edge probability matrix $\left[\p_{ij}^K(t)\right]_{i,j=1}^p$ with respect to the true adjacency matrix at time $t$. We set $K$ to the true number of segments ($K=4$). The curve represents the mean value obtained from the 100 samples and the ribbon gives the standard deviation. }

\label{fig:simu_aucroc}
\end{figure*}
\begin{figure*}[p]
\begin{minipage}{0.1\textwidth}
~
\end{minipage}
\begin{minipage}{0.298\textwidth}
\centering \hspace{0.6cm}Tree
\end{minipage}
\begin{minipage}{0.298\textwidth}
\centering \hspace{0.6cm}Erd\"{o}s-R\'{e}nyi, $p_C = 2/p$
\end{minipage}
\begin{minipage}{0.298\textwidth}
\centering \hspace{0.6cm}Erd\"{o}s-R\'{e}nyi, $p_C = 4/p$
\end{minipage}


\includegraphics[width=0.1\textwidth]{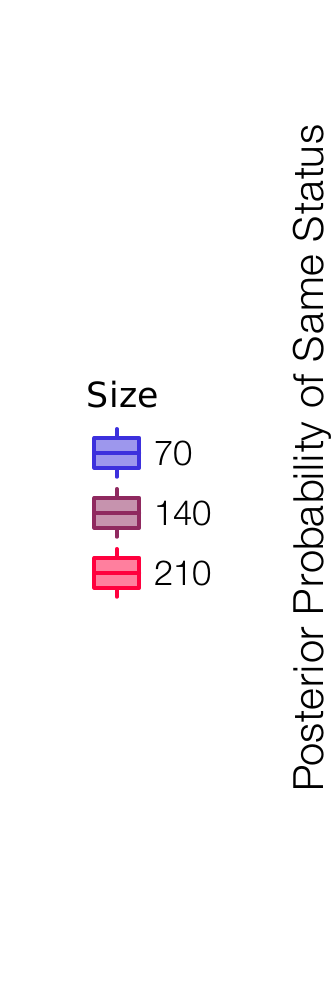}
\includegraphics[width=0.298\textwidth]{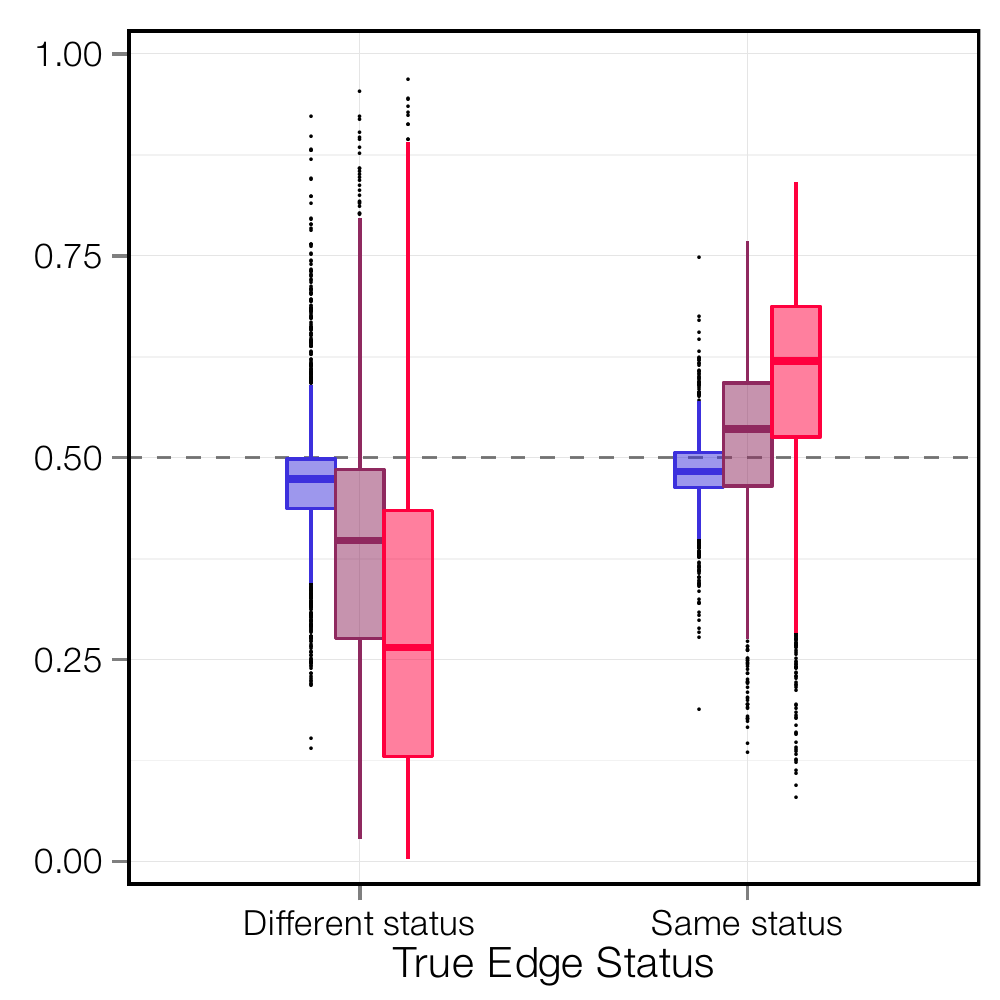}
\includegraphics[width=0.298\textwidth]{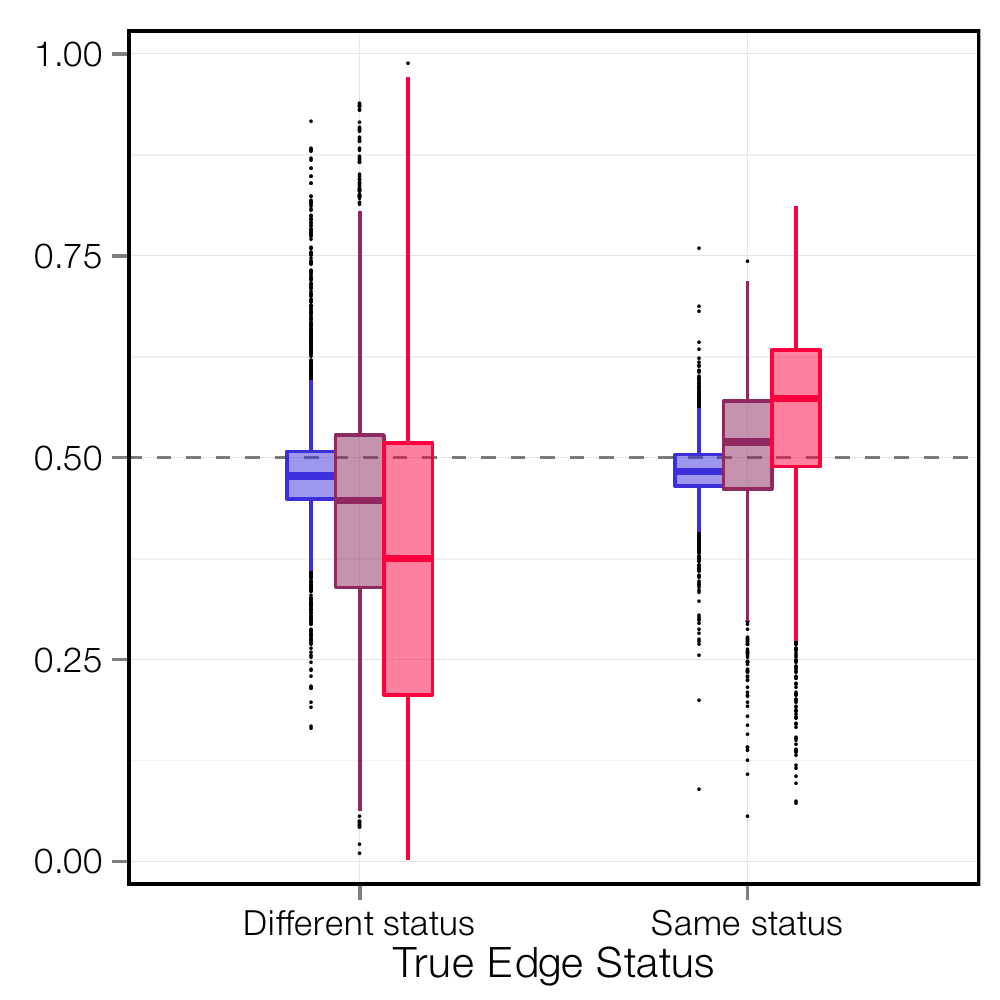}
\includegraphics[width=0.298\textwidth]{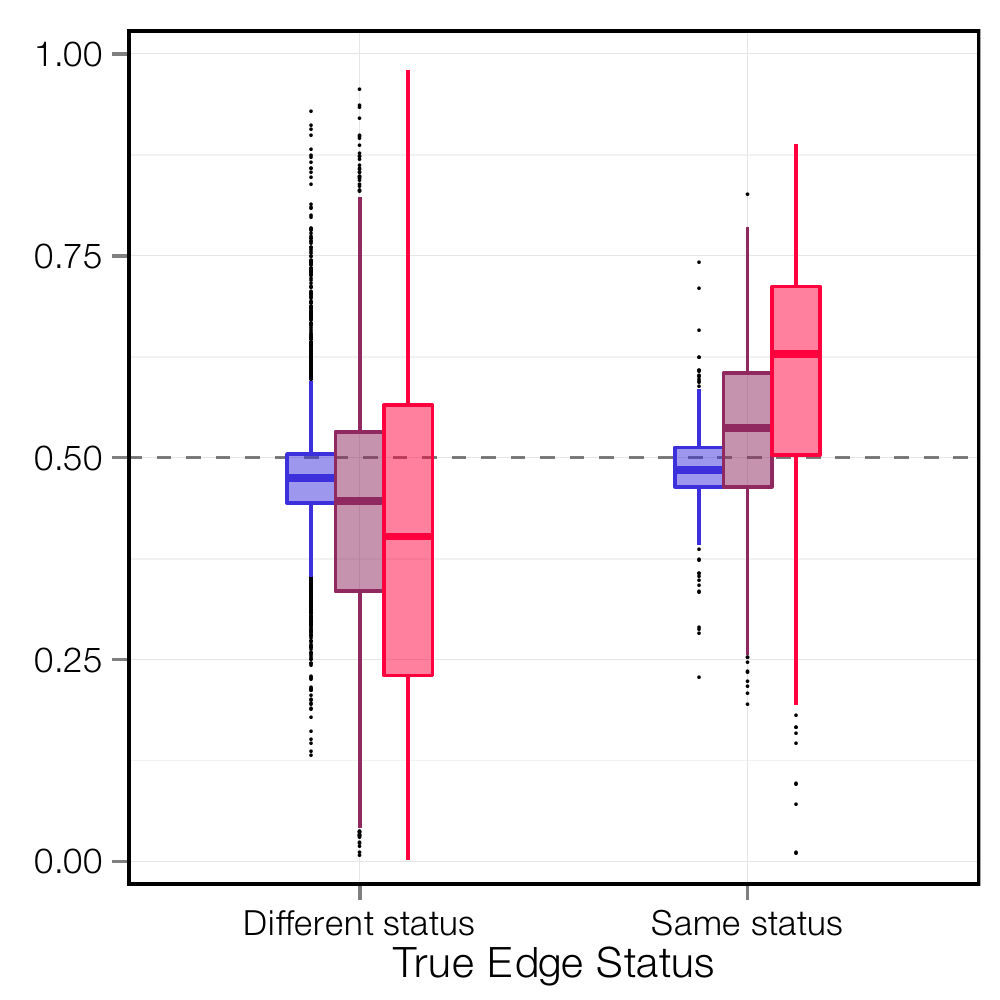}

\caption{Boxplot of the posterior probability for an edge to have the same status throughout the time-series. Edges were separated according to their true status (either identical in all graphs or not). Each boxplot aggregates the results for all edges with a given status and all datasets.}
\label{fig:simu_edge_status}

\end{figure*}

\blue{For this study, we generated time-series of size $N = 70$, $140$ and $210$. We choose segmentations with four segments of lengths $\frac{3}{7} N$, $\frac{1}{7} N$, $\frac{2}{7} N$ and $\frac{1}{7} N$ such that the relative length of each segment is kept identical through all sample sizes.} The number of variables was fixed to $p=10$. To give an idea of the sizes of the discrete sets we are working with, for $N=210$, the cardinalities of the segmentation and tree sets are respectively $|\M_4| \approx 1.5\cdot 10^{6}$ and $|\T| = 10^{8}$, {so the size of the space to be explored is $\approx 1.5\cdot 10^{38}$}. We built three structure scenarios by sampling structures from the uniform distribution on spanning trees, or from an Erd\"{o}s-R\'{e}nyi random graph distribution with connection probability $p_C = 2/p$ or $4/p$. Thus, for each scenario, we got a series $\{\Delta_r\}_{r\in M_N}$ of adjacency matrices describing the structure of the graphical model on all segments. The observations on a segment $r$ were then drawn according to a multivariate Gaussian distribution with mean vector zero and precision matrix $\Lambda_r$ equal to the Laplacian matrix of $\Delta_r$ augmented of 1 on the diagonal, rescaled so that each variable as unit variance. For each sample size and structure series, $100$ datasets were generated.

As described in the introduction of this section, the inference was then performed in the two following models. The first one is the full precision matrix model, without any structure constraint, and is given by
\begin{align}
\{\Lambda_r\}_{r\in m} & ~\textrm{i.i.d.}, & \Lambda_r &\sim \mathcal{W}(\alpha,\phi), \label{eq:simu_W}\\
\{Y_t\}_{t=1}^N & ~\textrm{independent}, & Y^t &\sim \mathcal{N}(\mathbf{0}_p,\Lambda_r), & \forall t\in r. \nonumber
\end{align}
where $\mathcal{W}(\alpha,\phi)$ stands for the Wishart distribution with $\alpha$ degrees of freedom and scale matrix $\phi$. The second one is the corresponding model with tree-structure assumption, as described in Section \ref{subsec:model}, and given by
\begin{align}
\{T_k\}_{k=1}^K & ~\textrm{i.i.d.}, & T_k &\sim \mathcal{U}(\T), \nonumber \\
\{\Lambda_r\}_{r\in m} & ~\textrm{independent}, & \Lambda_r &\sim h\mathcal{W}(\alpha,\phi,T_{\tiny \kappa(r|m)}) \label{eq:simu_W.C} \\
\{Y_t\}_{t=1}^N & ~\textrm{independent}, & Y^t &\sim \mathcal{N}(\mathbf{0}_p,\Lambda_r), ~~~ \forall t\in r, \nonumber
\end{align}
where we let $h\mathcal{W}(\alpha,\phi,T)$ denote the hyper-Wishart distribution based on $\mathcal{W}(\alpha,\phi)$ and with structure $T$ \citep{Schwaller2015}. In both cases, we set $\alpha = p+10$ and $\phi = (\alpha - p - 1)\cdot\mathbf{I}_p$, where $\mathbf{I}_p$ stands for the identity matrix of size $p$. The distribution of $m|K$ is set to the uniform on $\M_K$ and $K$ follows a Poisson distribution with parameter $\gamma = 4$, \blue{truncated to $\llbracket 1;10 \rrbracket$. Results for other prior distributions on $K$ are presented in the supplementary material.}

We emphasize the fact that, when the tree-structured model is considered, the series of precision matrices $\{\Lambda_r\}_{r\in m}$ used to generate the data only belongs to the support of the law in the first structure scenario. The graphs drawn from the Erd\"{o}s-R\'{e}nyi distributions are not trees and therefore cannot induce precision matrices in the support of a tree-structured  hyper-Wishart distribution. On the contrary, the full model obviously allows such precision matrices.

Finally, for the sake of clarity, we limited our study to centered data and null mean models, but one could allow the mean to vary between segments by using a (hyper) normal-Wishart distribution for $(\mu_r,\Lambda_r)$, where $\mu_r$ stands for the mean on segment $r$.

\subsection{Results}

\paragraph{Change-point location} We plotted the posterior probability of a change-point intervening at time $t$, integrated over $K$, as a function of $t$ in the tree-structured and full models (Figure \ref{fig:simu_chg_pt}). In both cases, change-points are hardly retrieved in the high-density Erd\"{o}s-R\'{e}nyi scenario, the inference performing better in the other two low-density scenarios. 
 The standard deviations across samples are lower for the tree-structured model than for the full model. We can also observe a smoother behaviour with respect to time in the tree-structured model.
\blue{Results on simulations with a greater number of segments ($K=10$, displayed in the supplementary material) confirmed these observations. As expected, the shortest segments are hardly detected when the length of the series is small.}
These results seem to show that, when one is interested in retrieved change-point locations,  the tree-structured model that we have presented can be considered in non-tree scenarios without any meaningful drop in performances. 
 
\begin{figure*}[t]
\vspace{-0.3cm}
\centering
\captionsetup[subfigure]{labelformat=empty}
\subfloat[]{\centering
\includegraphics[width=0.6\textwidth]{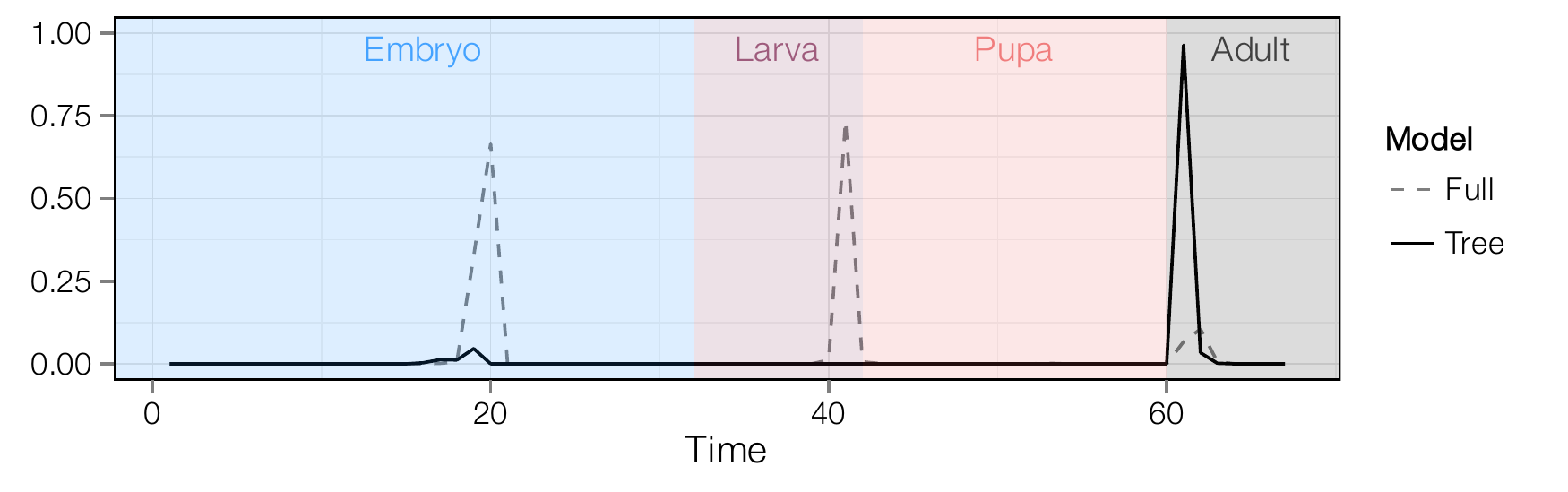}}
\subfloat[]{\centering
\includegraphics[width=0.338\textwidth]{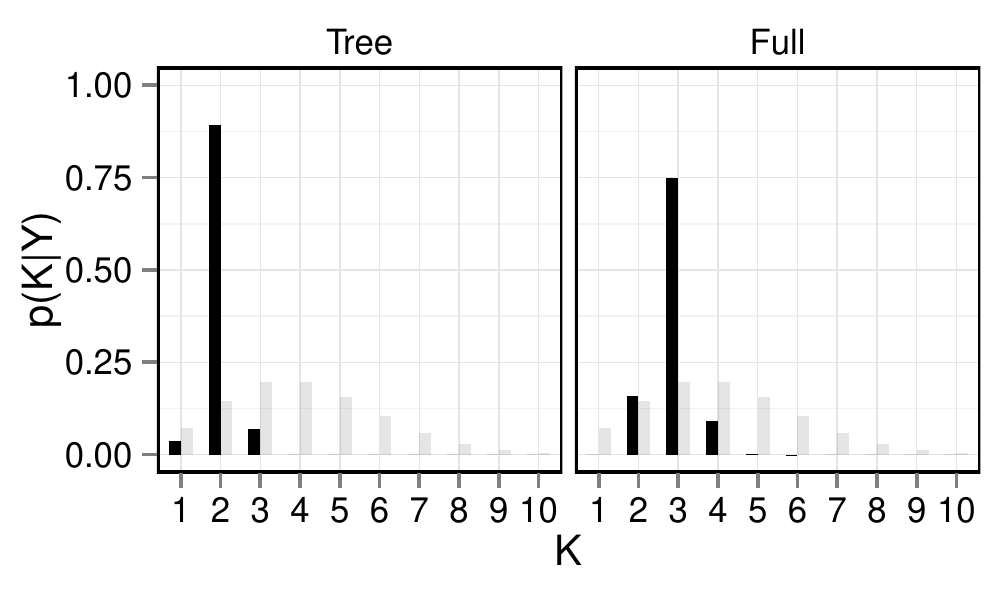}}

\vspace{-0.8cm}

(a) Naive prior

\vspace{-0.5cm}

\subfloat[]{\centering
\includegraphics[width=0.6\textwidth]{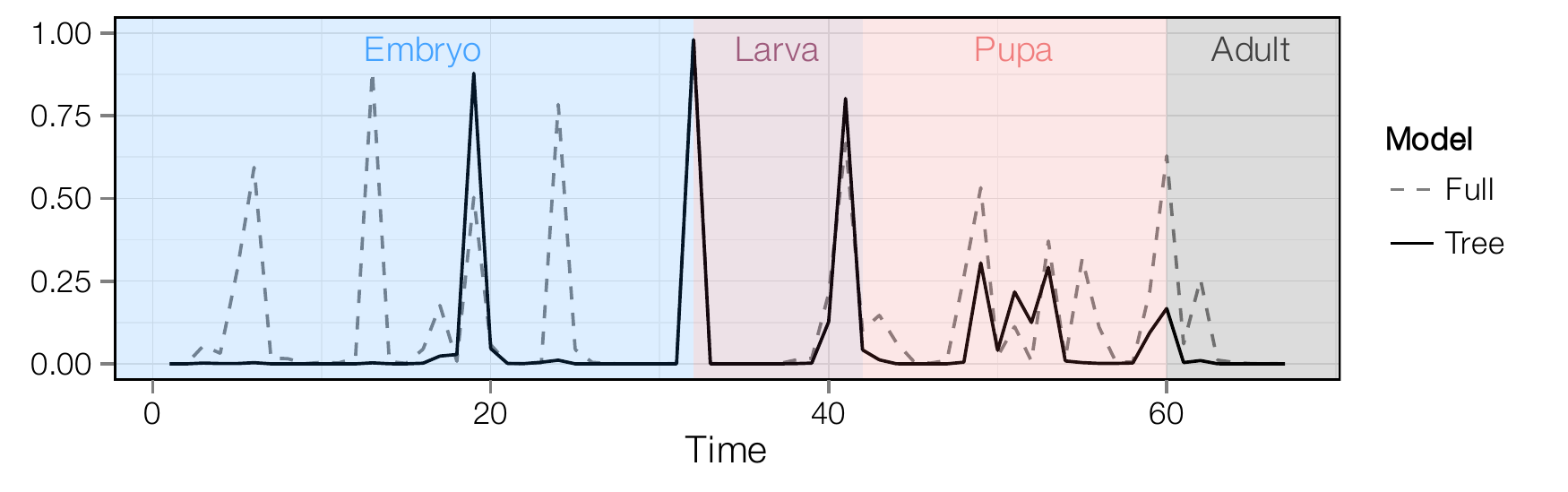}}
\subfloat[]{\centering
\includegraphics[width=0.338\textwidth]{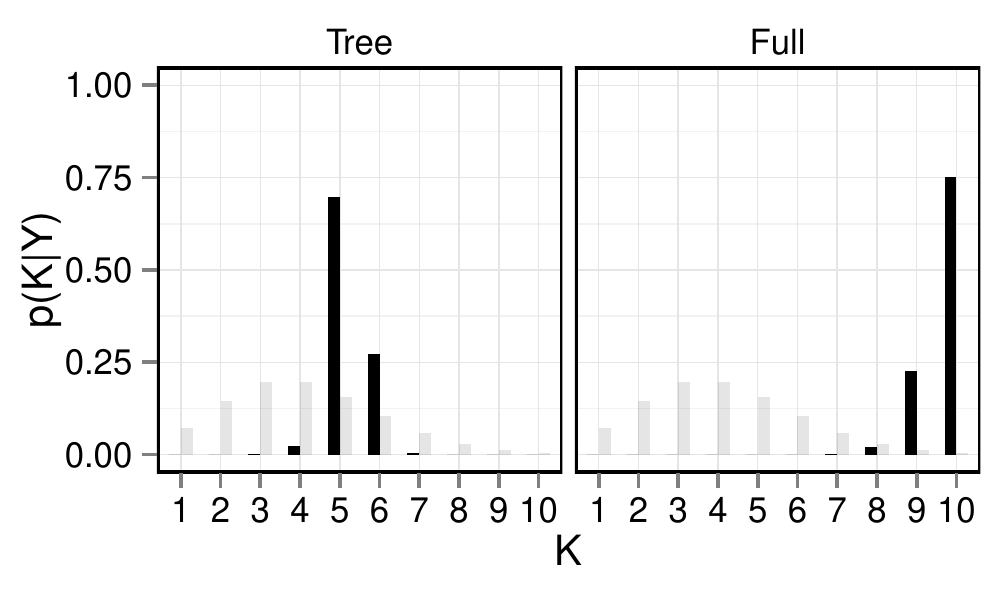}}
\vspace{-0.8cm}

(b) Data-driven prior


\caption{Posterior probability of a change-point occurring at time $t$ as a function of time integrated on $K$ (left) and posterior distribution for $K$ (right) for the full (Full) and tree-structured (Tree) models.}
\label{fig:droso}
\end{figure*}

\begin{figure*}[t]
\centering
\includegraphics[width=\textwidth]{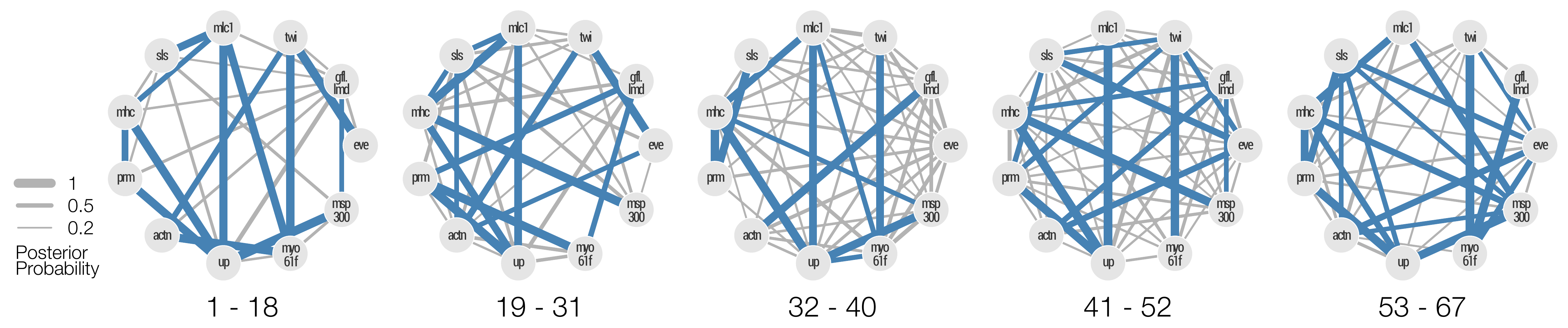}

\caption{Graphical representation of posterior edge probability matrix for each segment of the best segmentation with $5$ segments. The width of an edge is proportional to its posterior probability. Edges with probability higher than 0.5 are coloured in blue. Edges with probability lower than 0.2 were not represented. }
\label{fig:droso_graph}
\end{figure*}

\paragraph{Number of segments} For each sample, we computed $\hat{K}  = \argmax_{K}p(K|y)$ and $K(\hat{m}) = K(\argmax_{m}p(m|y))$. The results are given in Figure \ref{fig:simu_K}. In the full model, the number of segments selected by $\hat{K}$ and $K(\hat{m})$ varies a lot across samples and is usually higher than in the tree model. In the tree-structured model, both $\hat{K}$ and $K(\hat{m})$ tend to slightly underestimate the number of segments, especially in the highly-connected Erd\"{o}s-R\'{e}nyi scenario. They also display a more stable behaviour in the tree model. On small samples, $K(\hat{m})$ seems to achieve better stability.

\paragraph{Posterior edge probability} For $t \in \llbracket 1;N\rrbracket$, we computed the posterior edge probability matrix defined in (\ref{eq:posterior_edge_matrix}) for $K=4$. Figure \ref{fig:simu_aucroc} shows the area under the ROC curve of this matrix against the true adjacency matrix at time $t$. In all scenarios, the structure is better retrieved on long segments. A drop in the accuracy is systematically observed near true change-points. While presenting lower accuracy compared to the other two scenarios, the structure inference in the highly connected scenario still provides meaningful results.

\paragraph{Edge status comparison} 

The posterior probability for an edge to keep the same status throughout time was computed for all edges as explained in Section \ref{sec:edge_status}. We set the prior probability to change status at $\overline{\lambda}= 0.5$ and the prior probabilities to be always present or absent to $\lambda^+ = \lambda^- = 0.25$. We expected edges changing status during the time-series to be given low posterior probabilities. For small samples and across all scenarios, the posterior probability to have the same status remains close to the prior probability $0.5$ for all edges. When samples grow bigger, a small contrast sets up according to the edges effectively changing status or not. We nonetheless observe a large variability across samples and edges, that could be explained by the fact that some configurations are harder to detect than others. An edge only present on a small segment might for instance be considered absent through the whole series.

\section{Applications}
\label{sec:appli}

\subsection{\textit{Drosophila} Life Cycle Microarray Data}

The life-cycle of \textit{Drosophila melanogaster} is punctuated by four main stages of \blue{morphological development}: embryo, larva, pupa and adult. The expression levels of 4028 genes of wild-type \textit{Drosophila} were measured by \cite{Arbeitman2002} at $67$ time-points throughout their life-cycle. We have here restricted our attention to eleven genes involved in wing muscle development and previously studied by \cite{Zhao2006} and \cite{Dondelinger2013}. The expectation was that our approach would find change-points corresponding to the four different stages of \blue{development} observed for \textit{Drosophila melanogaster}.

We used the normal-Wishart version of the model described in the simulation study. When using the naive prior parameters given in Section \ref{sec:simulation}, we obtained poor results (Figure \ref{fig:droso}.a), probably because of the small number of time-points. We noticed that the results could be improved by using data-driven prior specification. \LS{}{We centered the data and set the prior  scale matrix $\phi$ of the normal-Wishart distribution with $\alpha = p +10$ degrees of freedom to $\phi = (\alpha - p - 1)\cdot \mathbf{\Sigma}_y$ where $\mathbf{\Sigma}_y$ stands for the sample covariance matrix}. By doing this, the normal-Wishart distribution that we get has expectancy \LS{}{$(\mathbf{0}_p,\mathbf{\Sigma}_y)$}. We then obtained the results given in Figure \ref{fig:droso}.b. For this prior, we looked closer to the results for $\hat{K} = \arg\max_K p(K|y) = 5$ segments, \textit{i.e. } one more than the number of \blue{development stages}. The best segmentation $\hat{m}_5$ with $5$ segments has change-points at positions $(19,32,41,53)$. The posterior probability of observing a change-point at these locations is quite high (Figure \ref{fig:droso}.b). The larva stage is almost exactly recovered, with a shift of one position for the end of the segment. The embryo stage is divided into two segments and the separation between pupa and adult states is missed, the last segment including both adulthood and part of the pulpa stage. These results are nonetheless encouraging. For each segment $r$ of $\hat{m}_5$, we computed the posterior edge probability matrix given by $(P(\{i,j\}\in E_T | y^{r}) )_{1 \leqslant i,j \leqslant p}$. On each segment, the prior probability for an edge to appear was set to 0.5 with an approach similar to what was done in Section \ref{sec:edge_status}. We give a graphical representation of the results in Figure \ref{fig:droso_graph}. In the first segment, fewer edges have large posterior probabilities. However, this higher contrast in probabilities might just be a consequence of this segment being larger than the others.

{Finally we compared our results with those obtained by \cite{Dondelinger2013} on the same dataset. As for the probability of change-point along time, the results we give in Figure \ref{fig:droso}.b are very similar to those displayed in Figure 12 of this reference. The comparison in terms of inferred networks is more complex as the networks they displayed correspond to the expected stages (embryo, larva, pupa and adult) and not to the one they actually inferred. We found good concordances between the network they inferred for the embryo stage and those that we obtained on segments [1-18] and [19-31] (both in the embryo stage). We also found similarities at the larva stage (which is close to our inferred [32-40] segment). }

\subsection{Functional MRI Data}

Functional magnetic resonance  imaging (fMRI) is commonly used in neuroscience to study the neural basis of perception, cognition, and emotion by detecting changes associated with blood flow. This second application focuses on fMRI data collected by \cite{Cribben2012}. We give a brief description of the experiment but we refer the reader to their article for a more detailed description. Twenty participants were submitted to an anxiety-inducing experiment. Before scanning, participants were told that they would have two minutes to prepare a speech on a subject given to them during scanning. Afterwards, they would have to give their speech in front of expert judges, but they had a ``small chance'' not to be selected. The subject of the speech was given after two minutes of recording. After two minutes of preparation, participants were told that they would not have to give the speech. The recording continued for two minutes afterwards. A series of 215 images at two-second intervals were acquired during the experiment. \cite{Cribben2012} preprocessed the data and determined five regions of interest (ROIs) in the brain on which the signals were averaged. Thus, we have $p=5$ and $N=215$, for $U=20$ participants. We standardised the data across all participants.\\

Each participant can be analysed individually by using the same approach as in the previous application. To analyse all participants together, we make the assumption that the dependence structure between the different ROIs of the brain is the same across participants, while being allowed to vary throughout time. Nonetheless, on a given temporal segment, therefore for a given structure, parameters are independently drawn for each participant, so that the likelihood on a segment $r$ can be written as
\begin{align}
p(y^r) = \sum_{T\in \T} \prod_{u = 1}^U \left[ \int \prod_{t\in r} p(y^{t,u}| \theta_u)p(\theta_u |T)d\theta_u \right] \label{eq:lk_fmri}
\end{align}
where $y^{t,u}$ stands for the vector of observations at time $t$ for participant $u$. The distribution $p(\theta_u|T)$ and $p(y^{t,u}|\theta_u)$ are respectively taken to be normal-Wishart and Gaussian distributions, as in the individual model. In practice, when we tried to perform the inference of the joint model, we were faced with numerical issues, occurring at different levels. The summation over trees was problematic for some segments, especially the largest one. Indeed, we are summing very small quantities and the product over participants in $p(y|T)$ brings us to deal with quantities of the order of machine precision. Moreover, while searching for the best segmentation can be achieved through $\log(A) = [\log(A_{s,t})]_{1 \leqslant s,t \leqslant N+1}$, integrating over segmentations requires the actual computation of matrix $A$. Thus, the exponentiation of the segment log-likelihood matrix leads to other numerical issues.

\begin{figure*}[t]
\subfloat[Posterior change-point probability for five participants \newline with the tree-structured model.]{\label{fig:fmri_chg.pt.a}
\includegraphics[width=0.5\textwidth]{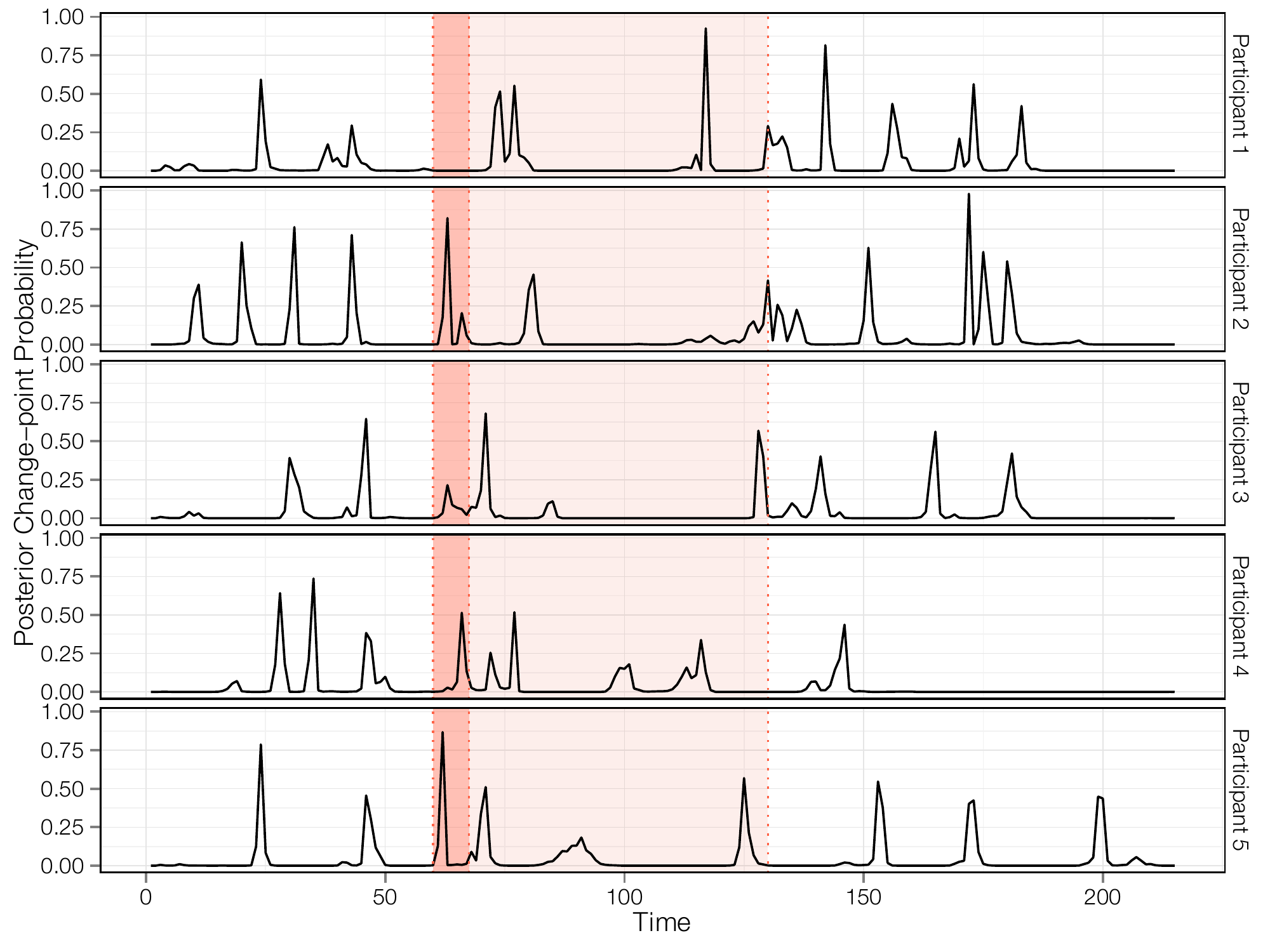}}
\subfloat[Mean and standard deviation of posterior change-point probability across participants with the tree-structured model.]{\label{fig:fmri_chg.pt.b}
\includegraphics[width=0.5\textwidth]{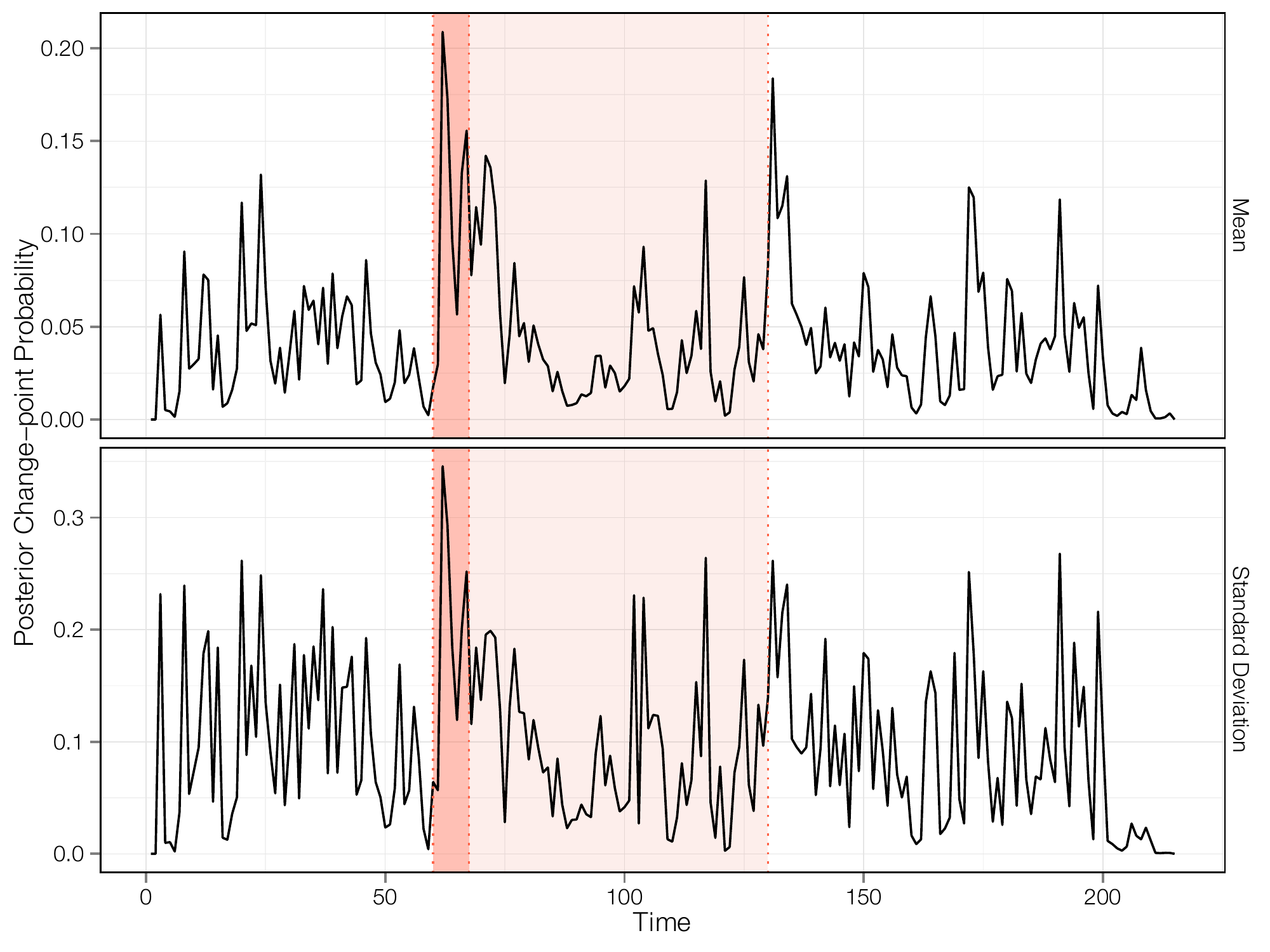}}

\vspace{-0.2cm}

\begin{center}
\subfloat[Posterior change-points probability (left) and posterior distribution on $K$ (right) when participants are jointly considered, with likelihood tempered at a level $\alpha$ for the full (Full) and tree-structured (Tree) models. ]{\label{fig:fmri_chg.pt.c}
\includegraphics[width=0.5625\textwidth]{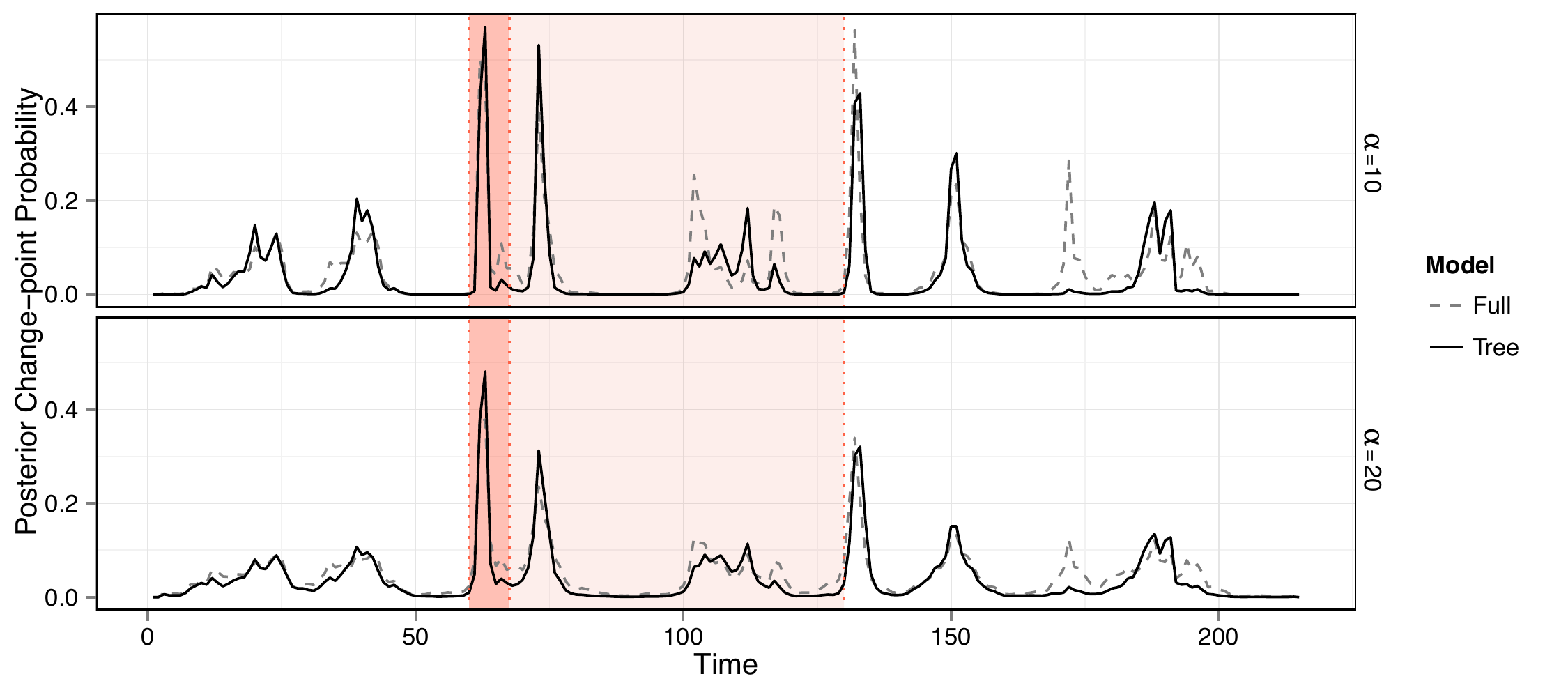}
\includegraphics[width=0.4375\textwidth]{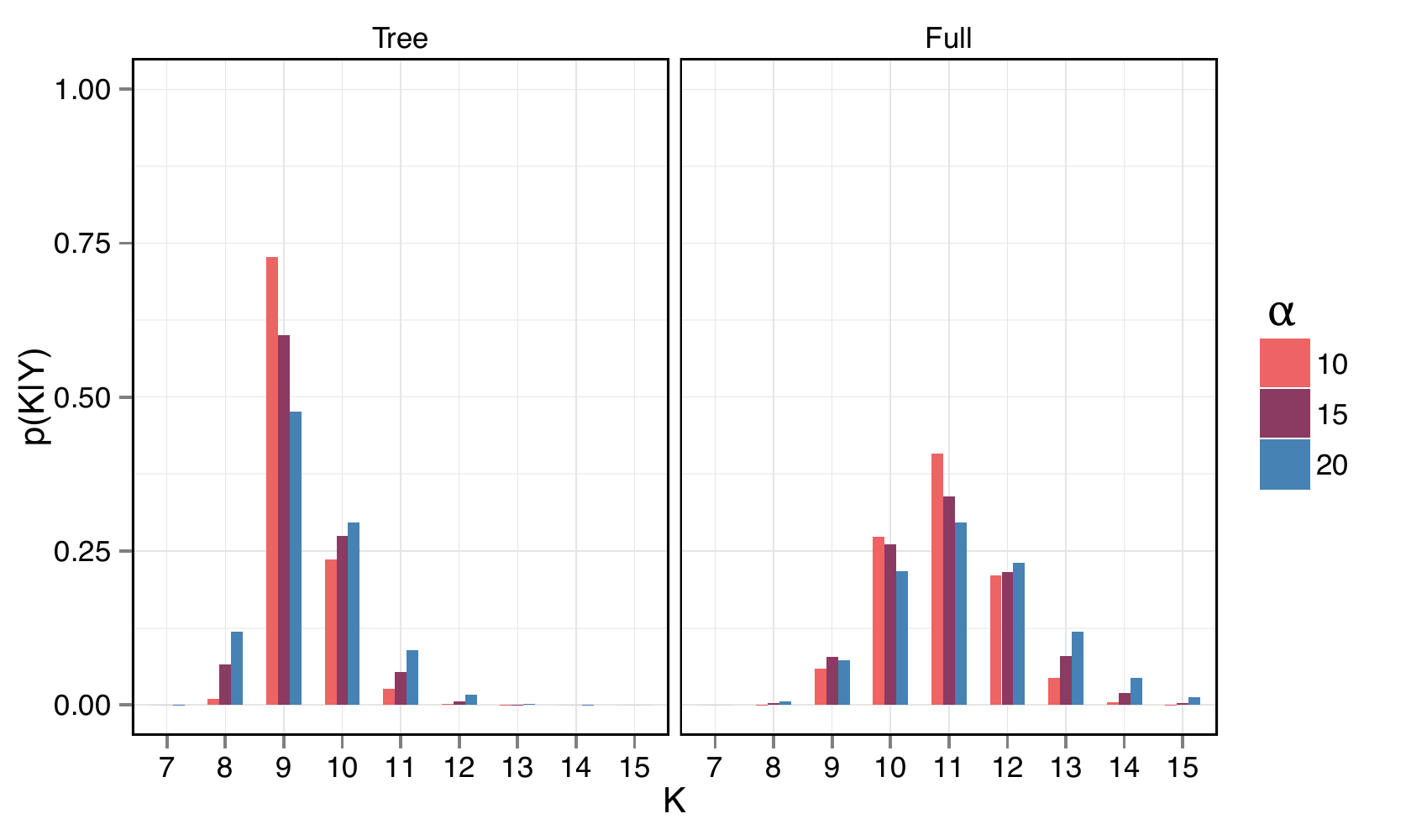}}

\caption{Change-point location for the fMRI data.  During the dark red interval, the subject of the speech was revealed to participants, who prepared their speech during the light red interval. This preparation is ended by a statement saying that they would not have to give the speech.}
\end{center}
\end{figure*}

\begin{figure*}[t]
\vspace{-0.4cm}
\includegraphics[width=\textwidth]{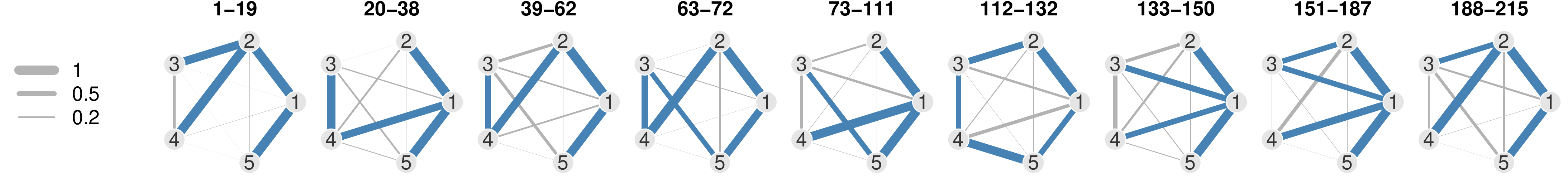}
\caption{Graphical representation of posterior edge probability matrix for each segment of the best segmentation with $K=9$ segments on fMRI data with non-tempered likelihood. The width of an edge is proportional to its posterior probability. Edges with probability higher than 0.5 are coloured in blue.}
\label{fig:fmri_network}
\end{figure*}

Our pragmatic answer to these issues was to considered a tempered version of the likelihood given in (\ref{eq:lk_fmri}):
\begin{align*}
p^*_{\alpha}(y^r) = \sum_{T\in \T} \prod_{u = 1}^U \left[ \int \prod_{t\in r} p(y^{t,u}| \theta_u)p(\theta_u |T)d\theta_u \right]^{1/\alpha}, \label{eq:lk_fmri_tempered}
\end{align*}
with $\alpha > 1$. Tempering the likelihood does not change the mode of the posterior distribution on $m$, if the matrix $a$ giving prior segment weights is tempered similarly. By doing this, we are actually reducing the effective sample size: a very big $\alpha$ would yield a posterior distribution on $m$ close to the prior. \\

Figures \ref{fig:fmri_chg.pt.a} and \ref{fig:fmri_chg.pt.b} sum up the results obtained participant per participant for change-point location. They vary a lot across participants, as shown by the five given examples, as well as the mean and standard deviation curves. The left panel of Figure \ref{fig:fmri_chg.pt.c} shows the posterior probability of observing a change-point when participants are jointly considered with a tree-structured model or with a non-structured model, with likelihood tempered at $\alpha = U/2 = 10$ and $\alpha = U = 20$. For both values of $\alpha$, the profiles are quite similar, with an expected more peaked behaviour for $\alpha = 10$. The strongest peak is observed during the announcement of the speech topic. The right panel of Figure \ref{fig:fmri_chg.pt.c} gives the posterior distribution of $K$ for both models and for different values of $\alpha$. We observe flatter distributions for the full model, with a mode at $11$ segments. In the tree-structured model, $9$ segments are selected. For this value of $K$, we looked at the best segmentation and computed the posterior edge probability matrices for its segments. A graphical representation of the results is given in Figure \ref{fig:fmri_network}. \cite{Cribben2012} retrieved $8$ segments from these data. There is no clear correspondence between our segmentation and theirs. A remark that can nonetheless be made is that, in our case, each change-point is associated with a clear change in the topology of the network. These structure changes are less obvious in \citep{Cribben2012}. This might be a consequence of our model explicitly modelling the structure, thus encouraging change-points to mark out abrupt changes in structure rather than in parameters. 

\section{Discussion}

In this paper, we showed how exact Bayesian inference could be achieved for change-points in the structure of a multivariate time-series with careful specification of prior distributions. Essentially, prior distributions have to factorise over both segments and edges. For the sake of clarity, we assumed that, within a segment $r$, observations $Y^t$ were independent conditionally on $T$ and $\theta$. While convenient and leading to comfortable formulas, this independence assumption is hardly realistic in many applied situations, including those that we have considered here.
Yet, time dependency could be considered within segments, as long as $p(y^r|T)$ still factorises over the edges of $T$. One could for instance consider using the work of \cite{Siracusa2009} to achieve this. Trees would then be used to model the dependences between two consecutive times instead of instantaneous dependences.

The framework that we have described is also convenient for Bayesian model comparison. When one is faced with an alternative in modelling,  Bayes factors between two models are easily obtained, as fully marginal likelihood can be computed exactly and efficiently. For instance, the question of whether changes should be allowed in the mean of a Gaussian distribution or not can be addressed by computing $p(y)$ in both cases and by looking at their ratio. This is by no mean specific to our approach, but exact computation makes it completely straightforward.

The exactness of the inference also creates a comfortable framework to precisely study the effect of the prior distribution on segmentations. Once again, as the inference does not rely on stochastic integration, the impact of prior specification could be evaluated at low cost and in an exact manner.

We finish this discussion by mentioning numerical issues. As explained in Section \ref{sec:appli}, when the number of observations increases, we have to deal with elementary probabilities that differ from several order of magnitudes. Because the summations over the huge spaces of both segmentations and trees are carried out in an exact manner, these quantities have to be added to each other, resulting in numerical errors. Obviously, no naive implementation would work and some of these errors can be avoided with careful and skilful programming. At this stage, this is still not sufficient and the likelihood tempering approach that we propose is not satisfying. Further numerical improvements could be considered such as the systematic ordering of the terms when computing a determinant in a recursive way.

The \textsf{R} code used in the simulations and the applications is available from the authors upon request. A package will soon be available from the Comprehensive \textsf{R} Archive Network. 


\begin{acknowledgements}
The authors thank Ivor Cribben (Alberta School of Business, Canada) for kindly providing the fMRI data. They also thank Sarah Ouadah (AgroParisTech, INRA, Paris, France) for fruitful discussions.
\end{acknowledgements}


\bibliography{biblio_segmentation}

\begin{thebibliography}{24}
\providecommand{\natexlab}[1]{#1}
\providecommand{\url}[1]{\texttt{#1}}
\expandafter\ifx\csname urlstyle\endcsname\relax
  \providecommand{\doi}[1]{doi: #1}\else
  \providecommand{\doi}{doi: \begingroup \urlstyle{rm}\Url}\fi

\bibitem[Arbeitman et~al.(2002)Arbeitman, Furlong, Imam, Johnson, Null, Baker,
  Krasnow, Scott, Davis, and White]{Arbeitman2002}
M.~N. Arbeitman, E.~E.~M. Furlong, F.~Imam, E.~Johnson, B.~H. Null, B.~S.
  Baker, M.~a. Krasnow, M.~P. Scott, R.~W. Davis, and K.~P. White.
\newblock {Gene expression during the life cycle of Drosophila melanogaster.}
\newblock \emph{Science (New York, N.Y.)}, 297\penalty0 (5590):\penalty0
  2270--2275, 2002.
\newblock ISSN 1095-9203.
\newblock \doi{10.1126/science.1072152}.

\bibitem[Auger and Lawrence(1989)]{Auger1989}
I.~E. Auger and C.~E. Lawrence.
\newblock {Algorithms for the optimal identification of segment neighborhoods}.
\newblock \emph{Bulletin of Mathematical Biology}, 51\penalty0 (1):\penalty0
  39--54, 1989.
\newblock ISSN 00928240.
\newblock \doi{10.1007/BF02458835}.

\bibitem[Barber and Cemgil(2010)]{Barber2010}
D.~Barber and A.~Cemgil.
\newblock {Graphical Models for Time-Series}.
\newblock \emph{IEEE Signal Processing Magazine}, \penalty0 (January), 2010.
\newblock ISSN 1053-5888.
\newblock \doi{10.1109/MSP.2010.938028}.

\bibitem[Barry and Hartigan(1992)]{Barry1992}
D.~Barry and J.~A. Hartigan.
\newblock {Product Partition Models for Change Point Problems}.
\newblock \emph{The Annals of Statistics}, 20\penalty0 (1):\penalty0 260--279,
  1992.
\newblock ISSN 0090-5364.
\newblock \doi{10.1214/aos/1176348521}.

\bibitem[Caron et~al.(2012)Caron, Doucet, and Gottardo]{Caron2012}
F.~Caron, A.~Doucet, and R.~Gottardo.
\newblock {On-line changepoint detection and parameter estimation with
  application to genomic data}.
\newblock \emph{Statistics and Computing}, 22\penalty0 (2):\penalty0 579--595,
  2012.
\newblock ISSN 09603174.
\newblock \doi{10.1007/s11222-011-9248-x}.

\bibitem[Cribben et~al.(2012)Cribben, Haraldsdottir, Atlas, Wager, and
  Lindquist]{Cribben2012}
I.~Cribben, R.~Haraldsdottir, L.~Y. Atlas, T.~D. Wager, and M.~a. Lindquist.
\newblock {Dynamic connectivity regression: Determining state-related changes
  in brain connectivity}.
\newblock \emph{NeuroImage}, 61\penalty0 (4):\penalty0 907--920, 2012.
\newblock ISSN 10538119.
\newblock \doi{10.1016/j.neuroimage.2012.03.070}.

\bibitem[Dondelinger et~al.(2013)Dondelinger, L\`{e}bre, and
  Husmeier]{Dondelinger2013}
F.~Dondelinger, S.~L\`{e}bre, and D.~Husmeier.
\newblock {Non-homogeneous dynamic Bayesian networks with Bayesian
  regularization for inferring gene regulatory networks with gradually
  time-varying structure}.
\newblock \emph{Machine Learning}, 90\penalty0 (2):\penalty0 191--230, 2013.
\newblock ISSN 08856125.
\newblock \doi{10.1007/s10994-012-5311-x}.

\bibitem[Fearnhead(2006)]{Fearnhead2006}
P.~Fearnhead.
\newblock {Exact and efficient Bayesian inference for multiple changepoint
  problems}.
\newblock \emph{Statistics and Computing}, 16\penalty0 (2):\penalty0 203--213,
  2006.
\newblock ISSN 09603174.
\newblock \doi{10.1007/s11222-006-8450-8}.

\bibitem[Fearnhead and Liu(2007)]{Fearnhead2007}
P.~Fearnhead and Z.~Liu.
\newblock {On-line inference for multiple changepoint problems}.
\newblock \emph{Journal of the Royal Statistical Society. Series B: Statistical
  Methodology}, 69\penalty0 (4):\penalty0 589--605, 2007.
\newblock ISSN 13697412.
\newblock \doi{10.1111/j.1467-9868.2007.00601.x}.

\bibitem[Fox et~al.(2009)Fox, Sudderth, Jordan, and Willsky]{Fox2009}
E.~Fox, E.~Sudderth, M.~Jordan, and A.~Willsky.
\newblock {Nonparametric Bayesian learning of switching linear dynamical
  systems}.
\newblock \emph{Advances in neural information processing systems},
  21:\penalty0 457--464, 2009.

\bibitem[Friedman et~al.(1998)Friedman, Murphy, and Russell]{Friedman1998}
N.~Friedman, K.~Murphy, and S.~Russell.
\newblock {Learning the structure of dynamic probabilistic networks}.
\newblock \emph{Proceedings of the Fourteenth conference on Uncertainty in
  artificial intelligence}, pages 139--147, 1998.
\newblock \doi{10.1111/j.1469-7580.2008.00962.x}.

\bibitem[Grzegorczyk and Husmeier(2011)]{Grzegorczyk2011}
M.~Grzegorczyk and D.~Husmeier.
\newblock {Improvements in the reconstruction of time-varying gene regulatory
  networks: dynamic programming and regularization by information sharing among
  genes}.
\newblock \emph{Bioinformatics}, 27\penalty0 (5):\penalty0 693--699, 2011.
\newblock ISSN 1367-4803.
\newblock \doi{10.1093/bioinformatics/btq711}.

\bibitem[Kolar et~al.(2010)Kolar, Song, Ahmed, and Xing]{Kolar2010}
M.~Kolar, L.~Song, A.~Ahmed, and E.~P. Xing.
\newblock \emph{{Estimating time-varying networks}}, volume~4.
\newblock 2010.
\newblock ISBN 0001409107.
\newblock \doi{10.1214/09-AOAS308}.

\bibitem[Lauritzen(1996)]{Lauritzen1996}
S.~L. Lauritzen.
\newblock \emph{Graphical Models}.
\newblock Oxford University Press, 1996.
\newblock ISBN 0-19-852219-3.

\bibitem[L\`{e}bre et~al.(2010)L\`{e}bre, Becq, Devaux, Stumpf, and
  Lelandais]{Lebre2010}
S.~L\`{e}bre, J.~Becq, F.~Devaux, M.~P.~H. Stumpf, and G.~Lelandais.
\newblock {Statistical inference of the time-varying structure of
  gene-regulation networks.}
\newblock \emph{BMC systems biology}, 4:\penalty0 130, 2010.
\newblock ISSN 1752-0509.
\newblock \doi{10.1186/1752-0509-4-130}.

\bibitem[Meil\u{a} and Jaakkola(2006)]{Meila06}
M.~Meil\u{a} and T.~Jaakkola.
\newblock Tractable bayesian learning of tree belief networks.
\newblock \emph{Statistics and Computing}, 16\penalty0 (1):\penalty0 77--92,
  2006.

\bibitem[Murphy and Mian(1999)]{Murphy99}
K.~Murphy and S.~Mian.
\newblock Modelling gene expression data using dynamic bayesian networks.
\newblock Technical report, 1999.

\bibitem[Rigaill et~al.(2012)Rigaill, Lebarbier, and Robin]{Rigaill2012}
G.~Rigaill, E.~Lebarbier, and S.~Robin.
\newblock {Exact posterior distributions and model selection criteria for
  multiple change-point detection problems}.
\newblock \emph{Statistics and Computing}, 22\penalty0 (4):\penalty0 917--929,
  2012.
\newblock ISSN 0960-3174.
\newblock \doi{10.1007/s11222-011-9258-8}.

\bibitem[Robinson and Hartemink(2010)]{Robinson2010}
J.~Robinson and A.~Hartemink.
\newblock {Learning non-stationary dynamic Bayesian networks}.
\newblock \emph{The Journal of Machine Learning \ldots}, 11:\penalty0
  3647--3680, 2010.
\newblock ISSN 1532-4435.

\bibitem[Schwaller et~al.(2015)Schwaller, Robin, and Stumpf]{Schwaller2015}
L.~Schwaller, S.~Robin, and M.~Stumpf.
\newblock {Bayesian Inference of Graphical Model Structures Using Trees}.
\newblock 2015.

\bibitem[Siracusa(2009)]{Siracusa2009}
M.~R. Siracusa.
\newblock {Tractable Bayesian Inference of Time-Series Dependence Structure}.
\newblock \emph{Proceedings of the 23th International Conference on Artificial
  Intelligence and Statistics}, 5:\penalty0 528--535, 2009.

\bibitem[Xuan and Murphy(2007)]{Xuan2007}
X.~Xuan and K.~Murphy.
\newblock {Modeling changing dependency structure in multivariate time series}.
\newblock \emph{Proceedings of the 24th International Conference on Machine
  Learning (2007)}, 227\penalty0 (m):\penalty0 1055--1062, 2007.
\newblock ISSN 1595937935.
\newblock \doi{10.1145/1273496.1273629}.

\bibitem[Zhao et~al.(2006)Zhao, Serpedin, and Dougherty]{Zhao2006}
W.~Zhao, E.~Serpedin, and E.~R. Dougherty.
\newblock {Inferring gene regulatory networks from time series data using the
  minimum description length principle}.
\newblock \emph{Bioinformatics}, 22\penalty0 (17):\penalty0 2129--2135, 2006.
\newblock ISSN 13674803.
\newblock \doi{10.1093/bioinformatics/btl364}.

\bibitem[Zhou et~al.(2010)Zhou, Lafferty, and Wasserman]{Zhou2010}
S.~Zhou, J.~Lafferty, and L.~Wasserman.
\newblock {Time varying undirected graphs}.
\newblock \emph{Machine Learning}, 80:\penalty0 295--319, 2010.

\end{thebibliography}

\end{document}